\documentclass{article} 
\usepackage{scale,times}

\usepackage{url}            
\usepackage{booktabs}       
\usepackage{amsfonts}       
\usepackage{amsmath}
\usepackage{amssymb}
\usepackage{nicefrac}       
\usepackage{microtype}      
\usepackage[table]{xcolor}         
\usepackage{bm}
\let\openbox\relax
\usepackage{amsthm}
\usepackage{makecell}
\usepackage{graphicx}
\usepackage{multirow}
\usepackage{subcaption}
\usepackage{tcolorbox}
\usepackage{pifont}
\usepackage{wrapfig}
\usepackage{enumitem}
\usepackage{algorithm}
\usepackage{algpseudocode}
\usepackage{fontawesome5}
\usepackage{hyperref}

\runninghead{MirrorMoE: Rethinking Routing Exploration in Modern MoEs}   

\newtheorem{theorem}{Theorem}[section]
\newtheorem{proposition}[theorem]{Proposition}

\theoremstyle{definition}
\newtheorem{definition}[theorem]{Definition}

\theoremstyle{remark}

\newcommand{\G}{\mathbf{G}}

\newcommand{\h}{\mathbf{h}}

\newcommand{\topk}{\mathsf{Top}}

\definecolor{mydarkblue}{rgb}{0,0.15,0.55}

\title{Rethinking Routing Exploration \\in Modern Mixture-of-Experts}

\author{Bingcong Li\aff{1,2}
\and Hao Ma\aff{2}
\and Michael Muehlebach\aff{3,4}
\and Niao He\aff{1}
\and Yi Ma\aff{2}
}

\affiliations{%
  \affil{1}{ETH Zurich}%
  \affil{2}{The University of Hong Kong}%
  \affil{3}{Max Planck Institute for Intelligent Systems}%
  \affil{4}{WU Vienna}%
}
  
\emails{
\texttt{\{bingcong.li, niao.he\}@inf.ethz.ch},
 \{haoma, mayi\}@hku.hk, \\
michael.muehlebach@tuebingen.mpg.de
}

\begin{document}

\maketitle

\begin{abstract}

Mixture-of-experts (MoE) models have become an important approach for scaling model parameters.
Modern MoE architectures increasingly combine two design choices: fine-grained expert segmentation and shared experts. 
The former increases expressiveness by enabling more expert combinations at a fixed activation ratio, and the latter introduces experts that are active for every token.
We identify an \textit{exploration dilemma} between these designs:  shared experts can make routing less exploratory by reducing the variability of token representations. 
This conflicts with the greater need for exploration induced by fine-grained expert segmentation due to the combinatorially enlarged routing space. Consequently, the additional expressiveness offered by fine-grained segmentation may not be fully realized in practice. 
MirrorMoE is proposed to reconcile this exploration dilemma by shrinking the exploration space through token-adaptive suppression of unnecessary experts.
MirrorMoE consistently improves performance across model scales. Code is available \href{https://github.com/BingcongLi/MirrorMoE/tree/main}{here}.

\end{abstract}

\section{Introduction}

Mixture-of-experts (MoE) models have been instrumental in scaling large language models (LLMs) to trillions of parameters and beyond \citep{lepikhin2020gshard,fedus2022switch}. MoEs adopt a sparse conditional-computation paradigm, where each token activates only a subset of experts, or subnetworks. During pretraining, recent work has shown that MoEs can outperform their dense counterparts under matched total parameter counts, compute, and data budgets \citep{li2026mixture}. At inference, sparse activation allows computational cost to grow sublinearly with total parameters. Thanks to these advantages, MoEs have become a core architecture underlying model families such as DeepSeek \citep{dai2024deepseekmoe,liu2024deepseek}, Kimi \citep{kimik2,kimik3}, and Qwen \citep{qwen1.5,yang2025qwen3}, among many others \citep{jiang2024mixtral,abdin2024phi}.

At the core of MoEs is the MoE layer, which consists of a router and $E$ experts, each implemented as a feedforward network (FFN). 
For each input token, the router selects $K$ experts ($K < E$) for sparse activation \citep{lepikhin2020gshard,fedus2022switch}. Building on this basic design, recent MoEs have increasingly adopted two notable architectural advances.
The first is \textit{fine-grained expert segmentation} \citep{kimik3,liu2024deepseek}. Recent MoEs tend to use larger numbers of both total and activated experts, i.e., larger $E$ and $K$. For example, at the same activation ratio $K/E$, $(E,K)=(64,8)$ is often preferred over $(E,K)=(8,1)$. Since there are $\binom{E}{K}$ expert combinations, the latter yields richer routing space for more expressive token representations \citep{elango2026latentmoe}.
The second advance is \textit{shared experts}. Activated for every token, shared experts capture common knowledge that would otherwise be learned redundantly by routed experts. This allows routed ones to devote more capacity to specialization.
The combination of the above architectural choices have become standard in modern MoEs \citep{dai2024deepseekmoe,liu2024deepseek,qwen1.5}.

This work reexamines the interaction between these two trends from the angle of \textit{routing exploration}. 
The goal of exploration is to search for alternative expert combinations during pretraining to identify (near-)optimal routes for each token.
On the one hand, realizing the expressive potential of fine-grained expert segmentation requires exploring a \textit{combinatorially} larger routing space within each MoE layer.
For example, the number of expert subsets increases from $8$ when $(E,K)=(8,1)$ to ${64 \choose 8}\approx 4.4\times10^9$ with $(E,K)=(64,8)$, i.e., several orders of magnitude increase. 
On the other hand, our analysis reveals an asymmetry during pretraining: shared experts dominate the learned token representations, leaving routed experts with less influence on the layer output. Consequently, even activating different routed experts only leads to limited variation in the learned features, making alternative routes harder to distinguish and thereby undermining the goal of exploration.

This exploration dilemma has tangible consequences for shared-expert MoEs. To see this, we train matched MoE models with and without shared experts at 4 scales, ranging from 100M to 2.4B total parameters. At each scale, we fix the total number of experts $E$ and vary the number of (total) activated experts $K$ over $\{6,8,10\}$. 
A larger $K$ simultaneously increases model expressiveness and demands more strategic exploration due to enlarged routing space. 
As shown in Fig.\ref{fig.scaling}, shared experts no longer provide benefits over their non-shared counterparts as $K$ increases. 
More strikingly, the performance of shared-expert MoEs can even degrade as $K$ increases in Figs.\ref{fig.scaling-498m} and \ref{fig.scaling-2.4b}, 
suggesting that the inefficient exploration hinders realization of the additional expressiveness from larger $K$.

MirrorMoE is proposed to reconcile the conflicting architecture designs above. 
It adaptively suppresses routed experts on a token basis, reducing the exploration space to better match the model's exploration capability. 
Technically, MirrorMoE is derived by mirroring the routing logits via sign negation, hence its name. As shown in Fig.\ref{fig.scaling}, MirrorMoE improves the performance of shared-expert MoEs and attains expressiveness gains from larger $K$. In summary, our contributions are as follows:

\begin{enumerate}[leftmargin=0.4cm]
    \item[\ding{118}] We identify an \textbf{exploration dilemma} in modern MoEs: the enlarged routing space induced by fine-grained expert segmentation demands greater exploration, while shared experts make routing less explorative. As a result, the benefits of shared experts diminish as the routing space grows.
    
    \item[\ding{118}] We propose \textbf{MirrorMoE}, which adaptively shrinks the exploration space to reconcile competing demands above. MirrorMoE is conceptually simple, and its adaptivity is theoretically characterized.

    \item[\ding{118}] We demonstrate that MirrorMoE addresses the exploration dilemma across MoEs of different scales. On the 2.4B model, MirrorMoE improves pretraining and downstream performance by 0.67 and 1.64 points, respectively. Moreover, it enables shared-expert MoEs to consistently outperform their non-shared counterparts and restores the performance gains from increasing $K$; see Fig.\ref{fig.scaling}.
    
\end{enumerate}

\begin{figure}[t]
  \centering
  \hspace{0.1cm}
  \begin{subfigure}{0.24\textwidth}
    \centering
    \vspace{-0.1cm}
    \includegraphics[width=\textwidth]{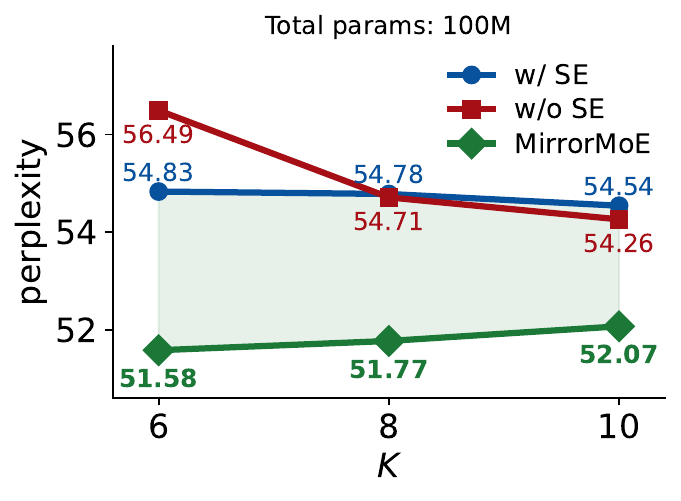}
    \vspace{-0.4cm}
    \caption{100M}
    \label{fig.scaling-100m}
  \end{subfigure}
  \begin{subfigure}{0.24\textwidth}
    \centering
    \vspace{-0.1cm}
    \includegraphics[width=\textwidth]{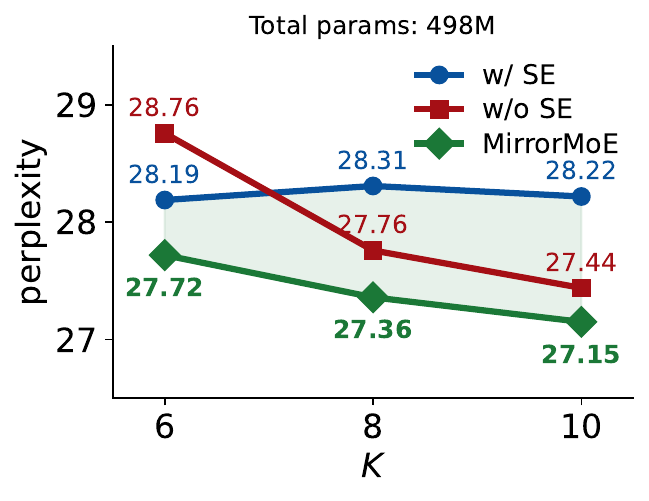}
    \vspace{-0.4cm}
    \caption{498M}
    \label{fig.scaling-498m}
  \end{subfigure}
\begin{subfigure}{0.24\textwidth}
    \centering
    \vspace{-0.1cm}
    \includegraphics[width=\textwidth]{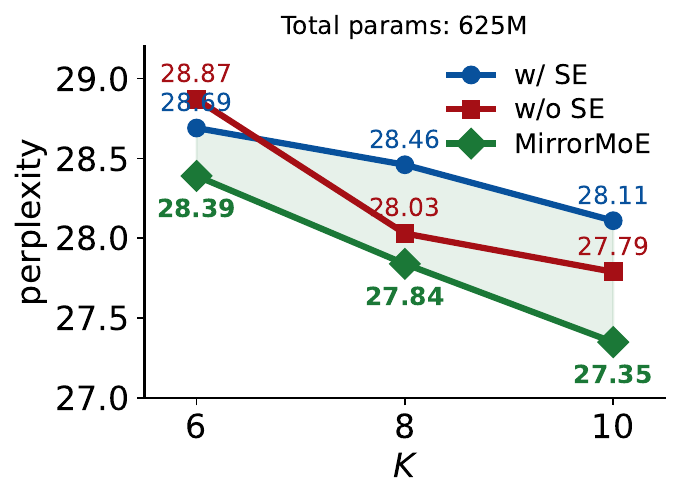}
    \vspace{-0.4cm}
    \caption{625M}
    \label{fig.scaling-625m}
  \end{subfigure}
\begin{subfigure}{0.24\textwidth}
    \centering
    \vspace{-0.1cm}
    \includegraphics[width=\textwidth]{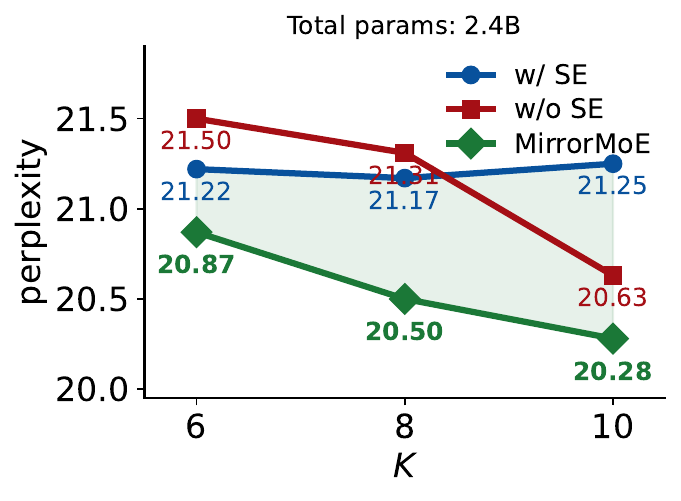}
	\vspace{-0.4cm}
    \caption{2.4B}
    \label{fig.scaling-2.4b}
  \end{subfigure}
  \vspace{-0.2cm}
  \caption{Comparison of MoEs with and without shared experts (SE) and the proposed MirrorMoE. MirrorMoE consistently i) achieves the best performance, ii) realizes the expressiveness gains from larger $K$, and iii) enables shared-expert MoEs to outperform their non-shared counterparts.}
  \label{fig.scaling}
  \vspace{-0.4cm}
\end{figure}

\subsection{Related work}\label{sec.related-work}

\textbf{MoEs.} Modern sparse MoEs originated with the sparsely gated MoE layer of \citet{shazeer2017outrageously} and were extended to large-scale Transformers in e.g., \citet{lepikhin2020gshard,fedus2022switch}. Later work focused on improving scalability, stability, and expert specialization. GLaM \citep{du2022glam} demonstrated that sparse language models could scale beyond one trillion parameters, while ST-MoE \citep{zoph2022st} improved training stability. MoEs have since become widely adopted. Beyond those mentioned earlier, other representative examples include Gemma4, OLMoE and gpt-oss \citep{team2026gemma, muennighoff2025olmoe, agarwal2025gpt}.
Across this development trajectory, two trends have clearly emerged: fine-grained expert segmentation and shared experts.

\textbf{Tension between expressiveness and pretraining.} A growing line of work shows that a model's expressive capacity may not be readily realized without carefully designed training procedures \citep{lion2026polar,zhang2026ancre,jha2026same,wei2025benefits}. 
Within the context of MoEs, related evidence is provided by \citet{jin2024moe++}, which augments learnable experts with fixed zero, copy, and constant experts. Although standard FFN experts can, in principle, learn these operations, explicitly incorporating them still improves empirical performance. 
The benefit of directly defining these operations, rather than learning them through training, suggests that the available expressiveness may not be fully realized in practice. 
Our work further shows that the additional expressiveness provided by fine-grained expert segmentation is not naturally attained in shared-expert MoEs.

Additional related work on MoE routing and systems is deferred to App.\ref{apdx.related-work} due to limited space.

\section{The Exploration Dilemma in Shared-Expert MoEs}

This section first reviews MoEs and then examines the effect of shared experts on exploration.

\subsection{MoE recap}

MoEs replace the FFN in each Transformer layer with a collection of FFNs, each referred to as an \emph{expert}. Among $E$ experts at layer $l$, there are two types: $S$ shared experts $\{{\cal SE}_{l,s}\}_s$, activated for every token \citep{liu2024deepseek}, and $R\!:=\!E\!-\!S$ routed experts $\{{\cal RE}_{l,e} \}_e$, activated in a token-driven manner. We use $K$ to denote the total number of activated experts, including the shared ones.

For $\h_l^n\in\mathbb{R}^d$, the representation of the $n$-th token entering the $l$-th MoE layer, the routed experts to be activated are selected by a linear projection $\G_l\in\mathbb{R}^{R\times d}$, known as the \emph{router}. The router produces a probability distribution over the routed experts, and those with the highest probabilities are activated. The activated routed experts are collected in ${\cal A}_l(\h_l^n)$:
\begin{align}\label{eq.moe-routing}
    p_{l,e} (\h_l^n)= \frac{\exp(\G_l [e,:] \h_l^n)}{\sum_{j=1}^R \exp(\G_l [j,:] \h_l^n)}, ~~~ {\cal A}_l(\h_l^n) = \topk_{K-S}   \big( \{p_{l,e} (\h_l^n) \}, e \in [R] \big).
\end{align}
Here, $\topk_{K-S}(\cdot)$ deterministically selects $K-S$ experts with the largest probabilities and $[R]:=\{1,\ldots,R\}$ denotes all the routed experts. 
The learned token representation of this layer combines the outputs of both shared and routed experts, where shared experts capture common features and routed experts learn specialized ones, i.e.,
\begin{align}\label{eq.moe}
	\h_{l+1}^n =\sum_{s=1}^S {\cal SE}_{l,s}(\mathbf{h}_l^n) + \sum_{e \in {\cal A}_l(\h_l^n) } \frac{p_{l,e} (\mathbf{h}_l^n)}{ Z(\h_l^n) } \cdot  {\cal RE}_{l,e}(\mathbf{h}_l^n)
\end{align}
where $Z(\h_l^n)=\sum_{j\in {\cal A}_l(\h_l^n) } p_{l,j} (\mathbf{h}_l^n)$ renormalizes the weights of the activated experts, as in \citep{yang2025qwen3} or $Z(\h_l^n)=1$ as in DeepSeek-MoE and Qwen1.5-MoE \citep{dai2024deepseekmoe,muennighoff2025olmoe, qwen-moe}. We follow the latter in this paper.
Vanilla MoEs without shared experts are recovered when $S=0$. For convenience, we omit the dependence of $p_{l,e}$ and ${\cal A}_l$ on $\h_l^n$, as well as the token index $n$ whenever clear from context.

Beyond shared experts, recent MoEs have also shifted toward fine-grained expert segmentation \citep{liu2024deepseek,yang2025qwen3,elango2026latentmoe}. For the same activation ratio, this trend favors configurations such as $R=64, K-S=8$ over $R=8, K-S=1$, enabling a combinatorially richer set of expert compositions and, hence, more expressive token representations. 
However, realizing such expressiveness through pretraining is far from straightforward, where
the challenge also stems from the \textit{combinatorial} expansion of the routing space. Identifying optimal expert combinations therefore requires more exploration. 
This motivates us to revisit exploration in MoE training.

\subsection{Shared experts weaken routing exploration}\label{sec.analysis}

This subsection reexamines the roles of shared and routed experts, reveals an asymmetry in their learning dynamics, and shows how shared-expert dominance can hinder exploration.

To isolate the impact of shared experts, we compare MoEs with and without them using matching datasets, tokenizers, and configurations. The models in this subsection have 121M activated and 498M total parameters, with $E=66$ and $K=8$. We set $S\!=\!2$ for shared-expert MoEs, while MoEs without shared experts amount to $S\!=\!0$. Details are deferred to App.\ref{sec.apdx.model-config}. We select checkpoints at $20\%$, $50\%$, and $100\%$ of pretraining to represent the early, middle, and final stages, respectively.

\textbf{Asymmetric training dynamics in shared-expert MoEs.} We first ablate individual experts to assess their importance. Specifically, we zero out each expert's output and measure the resulting perplexity change, $\Delta\mathsf{PPL}=\mathsf{PPL}_{\mathrm{after}}-\mathsf{PPL}_{\mathrm{base}}$. In Fig.\ref{fig.expert-ablation}, the two rightmost columns in the right-hand plots correspond to shared experts. Across all three checkpoints, ablating a shared expert leads to a substantially larger perplexity increase than ablating a routed expert. This shows that shared experts play a disproportionately important role in model performance.

\begin{wrapfigure}{r}{0.62\textwidth}
  \centering
  
 \vspace{-0.5cm}

  \begin{subfigure}{\linewidth}
    \centering
    \includegraphics[width=\linewidth]{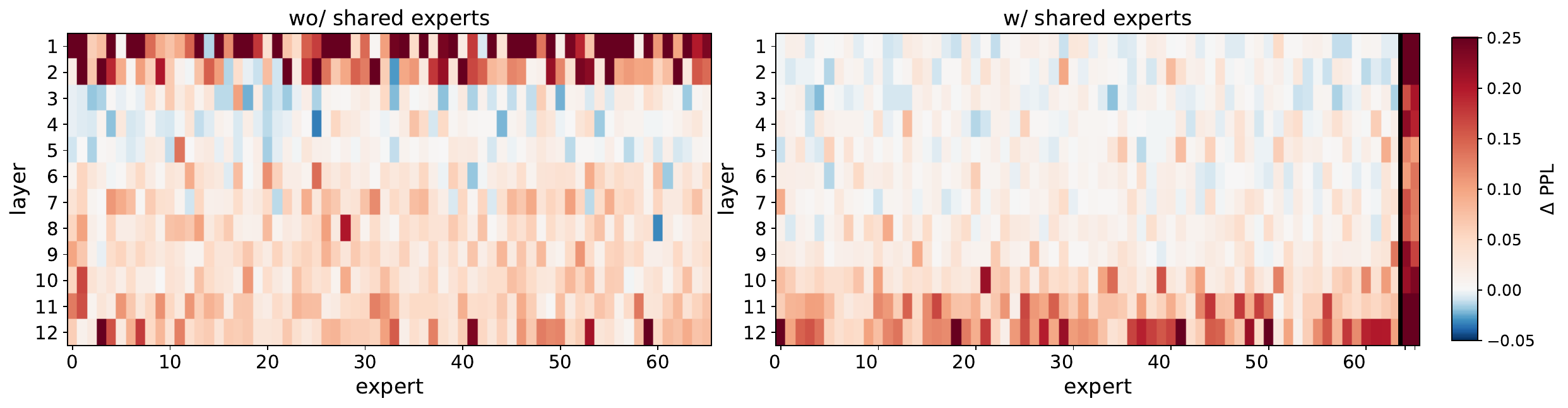}
    \vspace{-0.6cm}
    \caption{20\% of training}
    \label{fig:se-energy-20}
  \end{subfigure}

  \begin{subfigure}{\linewidth}
    \centering
    \includegraphics[width=\linewidth]{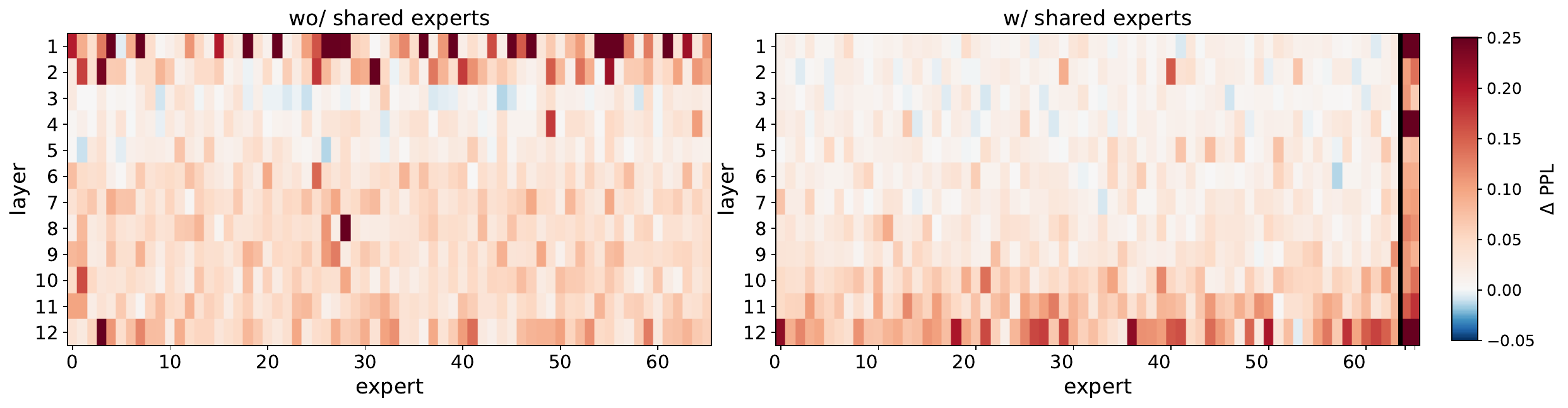}
    \vspace{-0.6cm}
    \caption{50\% of training}
    \label{fig:se-energy-50}
  \end{subfigure}

  \begin{subfigure}{\linewidth}
    \centering
    \includegraphics[width=\linewidth]{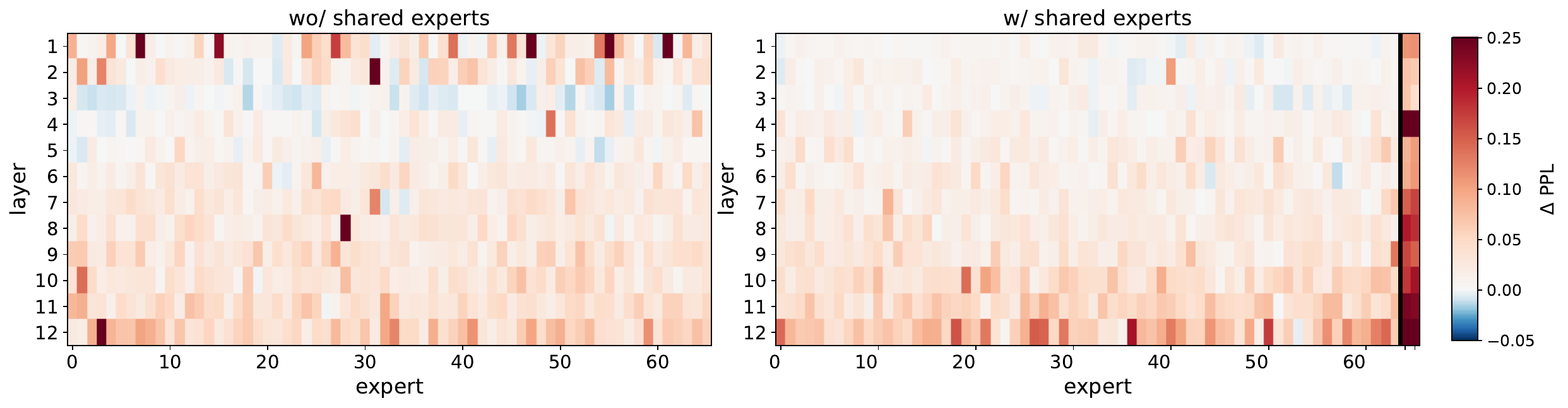}
    \vspace{-0.6cm}
    \caption{100\% of training}
    \label{fig:se-energy-100}
  \end{subfigure}
  \vspace{-0.6cm}
  \caption{Perplexity increase after ablating individual experts. In the right subplot, the two rightmost columns, separated by a solid line, correspond to shared experts.}
  \label{fig.expert-ablation}
  \vspace{-0.6cm}
\end{wrapfigure}

Another means to quantify the asymmetric roles of experts is through the so-termed \textit{routing criticality}, which replaces the learned router with uniform random routing and measures the resulting performance change.
 We apply this intervention one layer at a time and report the relative perplexity increase,
$\frac{\mathsf{PPL}_{\mathrm{after}}-\mathsf{PPL}_{\mathrm{base}}}{\mathsf{PPL}_{\mathrm{base}}}$. 
Larger values indicate greater dependence on the learned router. 
As shown in Fig.\ref{fig.routing-criticality}, uniform routing in the early and middle layers of the shared-expert MoE leads to a much smaller perplexity increase than in standard MoEs. 
In other words, \textit{shared experts in these layers capture such a large fraction of the learned representation that the layers are robust to random routing.}

Next, we examine how this asymmetry in expert roles shapes exploration. 
Exploration aims to discover alternative routes that produce diverse token representations. We therefore quantify exploration by measuring how much the learned representation changes under different routing decisions. 
Even before this analysis, the asymmetry between shared and routed experts suggests that exploration may be ineffective: if a highly exploratory routing policy, such as uniform routing, barely changes the output of a shared-expert layer, other routing is also unlikely to produce substantially different representations. We next confirm this intuition using two complementary variance metrics, one induced by experts and the other by routers.

\textbf{Shared experts suppress expert-induced variance.}
We first quantify how much an MoE layer’s output can vary when different experts are activated. To this end, we introduce the notion of \emph{expert variance}. Given an input token $\h_l$ and its learned output $\h_{l+1}$ in \eqref{eq.moe}, we define the expert variance as
\begin{align*}
	\mathsf{ExpertVar}_l = \frac{1}{R}\sum_{e=1}^R  \frac{\| \tilde{\h}_{l+1,e} - \mathbf{\h}_{l+1} \|^2}{\|\h_{l+1}\|^2} ~~~\text{where}~~~ \tilde{\h}_{l+1,e} = \sum_{s=1}^S {\cal SE}_{l,s}(\mathbf{h}_l) +  {\cal RE}_{l,e}(\h_l).
\end{align*}
In other words, $\tilde{\h}_{l+1,e}$ is the representation obtained if only a single expert $e$ is activated.
A larger $\mathsf{ExpertVar}_l$ indicates that changing routed experts can produce substantially different representations compared to $\mathbf{\h}_{l+1} $, indicating greater potential for exploration.
Moreover, $\mathsf{ExpertVar}_l$ bounds the relative change induced by \textit{any} routing policies from above if $Z(\h_l)=\sum_{j\in {\cal A}_l(\h_l) } p_{l,j} (\mathbf{h}_l)$. Let $\hat{\h}_{l+1}$ denote the representation obtained from any set of activated experts and routing probabilities. As shown in App.\ref{apdx.derive-ub}, we can prove that
	$\max_{\hat{\h}_{l+1}}  \|\hat{\h}_{l+1}-\h_{l+1}\|^2/ \|\h_{l+1}\|^2 \leq R \cdot \mathsf{ExpertVar}_l$. 
We evaluate $\mathsf{ExpertVar}_l$, averaged over tokens, for every MoE layer in Fig.\ref{fig.expert-var-norm}. 
Expert variance is smaller across all layers when shared experts are present. This is because shared experts dominate the output, while routed experts can only make relatively small variations. Consequently, shared-expert MoEs are less explorative even with different activated experts.

\begin{figure}[t]
  \centering
  \begin{subfigure}{0.32\textwidth}
    \centering
    \vspace{-0.1cm}
    \includegraphics[width=\textwidth]{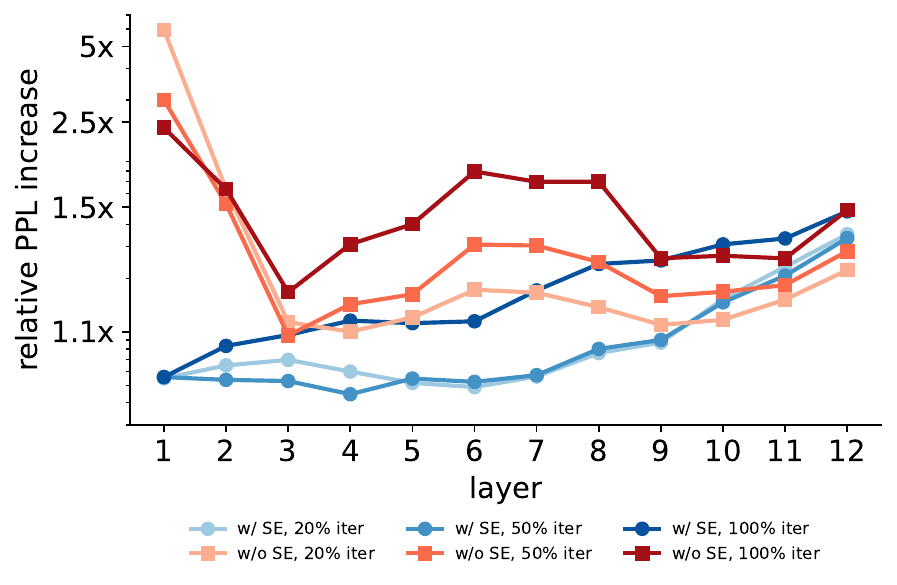}
    \caption{Routing criticality}
    \label{fig.routing-criticality}
  \end{subfigure}
  \label{fig.shared-vs-routed}
  \begin{subfigure}{0.32\textwidth}
    \centering
    \vspace{-0.1cm}
    \includegraphics[width=\textwidth]{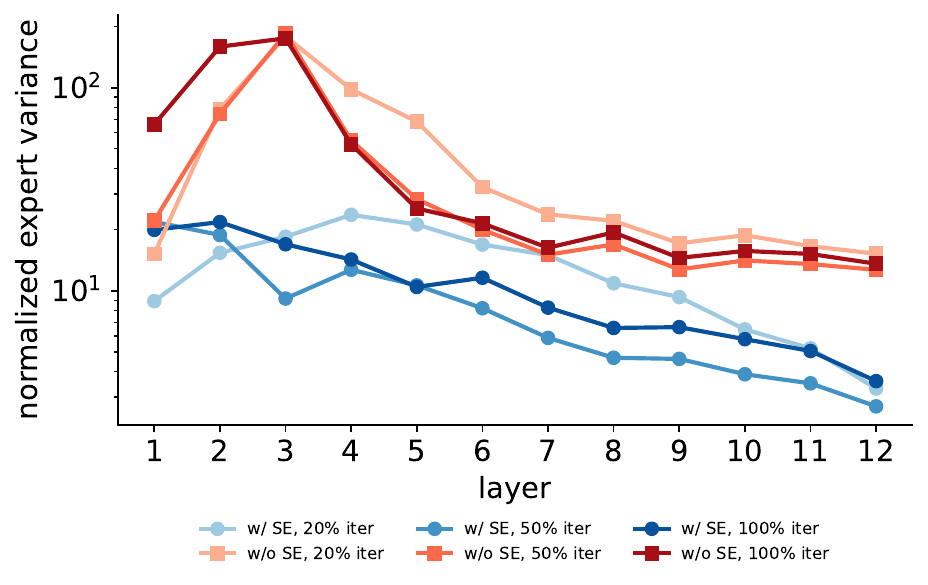}
    \caption{Expert variance}
    \label{fig.expert-var-norm}
  \end{subfigure}
  \begin{subfigure}{0.32\textwidth}
    \centering
    \vspace{-0.1cm}
    \includegraphics[width=\textwidth]{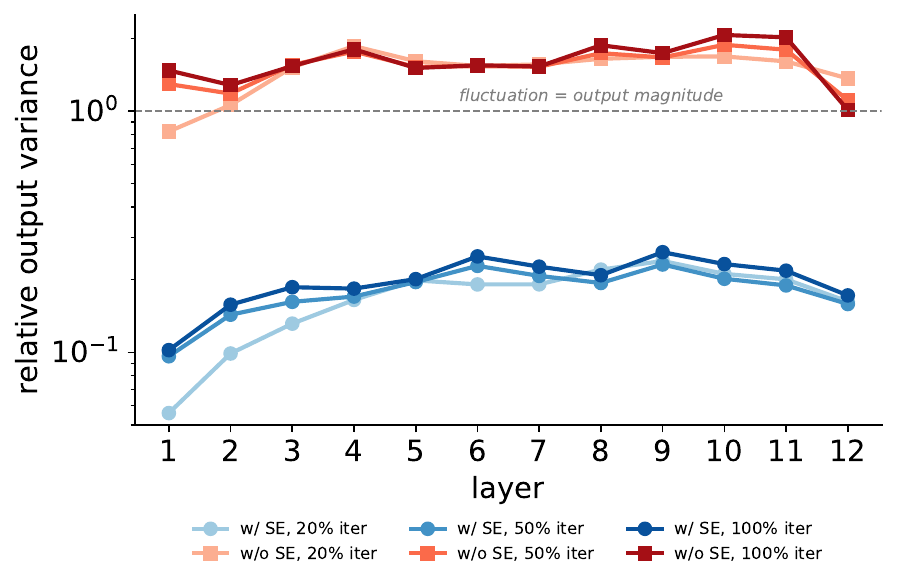}
    \caption{Router variance}
    \label{fig.router-var-norm}
  \end{subfigure}
  \caption{Shared experts make MoE routing less exploratory.}
 \end{figure}

\textbf{Shared experts reduce router-induced variance.} 
Next, we show that when routing decisions are sampled from the router’s learned distribution, shared-expert MoEs also produce less variable output features. To this end, we replace deterministic top-$(K-S)$ routing in \eqref{eq.moe-routing} with stochastic expert selection according to the learned routing probabilities. For each token, we independently sample $M=256$ expert sets and denote the resulting representation of the $m$-th sample by $\hat{\h}_{l+1}(m)$, where $m\in[M]$. We then define the router-induced variance as
\begin{align*}
	\mathsf{RouterVar}_l = \frac{1}{M}\sum_m  \frac{\| \hat{\h}_{l+1}(m) - \bar{\h}_{l+1} \|^2}{\|  \bar{\h}_{l+1} \|^2} ~~~ \text{where}~~~  \bar{\h}_{l+1} = \frac{1}{M}\sum_m \hat{\h}_{l+1}(m). 
\end{align*}
A larger $\mathsf{RouterVar}_l$ indicates that sampling according to the routing distribution is more likely to produce diverse features, and thus more explorative. Fig.\ref{fig.router-var-norm} reports $\mathsf{RouterVar}_l$ averaged over tokens. Because shared experts dominate the layer output, they reduce $\mathsf{RouterVar}_l$ by approximately an order of magnitude relative to a standard MoE. 
The stochastic routing above is equivalent to adding Gumbel noise with $\beta=1$ to the routing logits \citep{nie2021evomoe}. We further evaluate perturbed routing distributions with $\beta\in\{0.5, 2\}$. Results in App.\ref{apdx.num.gumbel} exhibit the same trend as Fig.\ref{fig.router-var-norm}. Thus, even under routing distributions perturbed from the learned one, shared-expert MoEs are less likely to produce diverse representations.

In summary, our diagnosis suggests that exploration faces inherent limitations in shared-expert MoEs.
\begin{tcolorbox}[colback=gray!5, colframe=black!75,
    boxrule=1.2pt, arc=2pt, left=6pt, right=6pt, top=5pt, bottom=5pt]
\textbf{Takeaway.} Shared experts can dominate representation learning, limiting representational diversity and thereby weakening routing exploration.
\end{tcolorbox}

\subsection{Consequences of the exploration dilemma}

The preceding discussions reveals a fundamental \textit{exploration dilemma} in shared-expert MoEs. 
On the one hand, the combinatorial growth of the routing space induced by fine-grained expert segmentation demands greater exploration to identify (near-)optimal expert combinations. 
On the other hand, shared experts reduce feature variation, thereby weakening exploration.
This mismatch gives rise to several practical inefficiencies. 
We reveal them by evaluating models ranging from 100M to 2.4B total parameters. We set $R \in \{64, 128\}$ and $S=2$ for shared-expert MoEs, and match the total number of experts for MoEs without shared experts. We vary the number of activated experts as $K\in\{6,8,10\}$. In this scenario, increasing $K$  improves expressiveness but also enlarges the exploration space.

\textbf{Inefficiency 1. Benefits of shared experts are eroded.} The first consequence is that shared-expert MoEs do not consistently improve upon standard MoEs. As shown in Fig.\ref{fig.scaling}, when $K$ is small and hence demanding less exploration, shared-expert MoEs outperform their non-shared counterparts. As $K$ increases, however, the benefits of shared experts diminish or even disappear.

\textbf{Inefficiency 2. Shared experts can diminish the expressive benefits of increasing $K$.} Across the four model scales in Fig.\ref{fig.scaling}, increasing $K$ from $6$ to $10$ yields perplexity gains of $\{2.23,1.32,$ $1.08,0.87\}$ for MoEs without shared experts. With shared experts, however, these gains drop to $\{0.29,-0.03,0.58,-0.03\}$, i.e., negative at two scales. These negative numbers indicate that additional expressiveness from activating more experts is not fully realized with shared experts.

\textbf{Inefficiency 3. Shared experts slow routing specialization.} As shown in Fig.\ref{fig.routing-criticality}, replacing the learned router with a uniformly random one leads to a steadily increasing relative perplexity gap across training stages for MoEs without shared experts. This is expected as routing becomes increasingly specialized. In contrast, shared-expert MoEs exhibit little increase in this gap between the $20\%$ and $50\%$ checkpoints, suggesting slower routing specialization.

\section{MirrorMoE for Pretraining Shared-Expert MoEs}

To reconcile this exploration dilemma and mitigate the inefficiencies discussed earlier, we propose MirrorMoE, a simple yet effective pretraining approach tailored to MoEs with shared experts.

\subsection{MirrorMoE shrinks the exploration space}

One solution to the exploration dilemma is to encourage exploration. However, this can introduce tradeoffs in MoE training, such as reduced stability and increased optimization variance, as further illustrated in App.\ref{apdx.num.pretrain}. We thus take the opposite approach: \textit{shrink the routing space to better match the model's exploration capability.} 

An intuitive way to view this is through the number of effective expert combinations, which scales with ${R\choose K_{\mathsf{eff}}}$. We formally define $K_{\mathsf{eff}}$ in Sec.\ref{sec.routing-visual}; for now, it can be understood as 
the number of functionally active experts 
induced by the routing method. The routing space is reduced when $K_{\mathsf{eff}}<K-S$. 
However, as shown by the ablation in Sec.\ref{sec.num.ablation}, naively reducing activated experts is suboptimal. MirrorMoE reduces $K_{\mathsf{eff}}$ by leveraging experts with low routing probabilities. It selects these low-probability experts in a token-adaptive manner but assigns them negligible aggregation weights. They therefore contribute little to the learned token representation and are functionally suppressed, reducing the number of experts that need to be explored. We detail this approach below.

\textbf{Mirror routing.}
Recall that $\G_l$ denotes the router weights at the $l$-th layer, and consider a token representation $\h_l\in\mathbb{R}^d$. Mirror routing activates experts according to a new routing probability
\begin{align}\label{eq.routing_prob}
    p_{l,e}  = (1 - \rho ) \underbrace{\frac{\exp(\G_l[e,:] \h_l)}{\sum_{j=1}^R \exp(\G_l [j,:] \h_l) }}_{:=\xi_{l,e}} + \rho \underbrace{ \frac{\exp(- \G_l [e,:] \h_l)}{\sum_{j=1}^R \exp(-\G_l [j,:] \h_l) }}_{:=\zeta_{l,e}}
\end{align}
where any $\rho\in[0,1]$ yields a valid probability distribution. The first term $\xi_{l,e}$ is the standard routing distribution and favors experts with larger logits. The second term $\zeta_{l,e}$ is obtained by negating the sign of $\G_l$, and therefore favors experts with smaller logits. This sign reversal motivates the name mirror routing. The convex combination of $\xi_{l,e}$ and $\zeta_{l,e}$ shifts routing probability towards both ends of the logit range. Experts with larger $p_{l,e}$ are activated
\begin{align}\label{eq.mirror-activated}
	{\cal A}_l(\h_l) = \topk_{K-S}   \big( \{p_{l, e} \}, e \in [R] \big).
\end{align}

\textbf{Mirror aggregation.}
This is the central mechanism for shrinking the exploration space, where we aggregate the selected experts using the probabilities $\xi_{l,e}$ instead of $p_{l,e}$
\begin{align}\label{eq.mirror-agg}
   \h_{l+1} = \sum_{s=1}^S {\cal SE}_{l,s}(\mathbf{h}_l) + \sum_{e \in {\cal A}_l} \frac{\xi_{l,e}}{Z(\h_l)}  {\cal RE}_{l,e} (\h_l).
\end{align}
Here, we set $Z(\h_l)=1$, analogous to \eqref{eq.moe} and consistent with the convention adopted throughout this work. Alternatively, one may use $Z(\h_l)=\sum_{j\in{\cal A}_l}\xi_{l,j}$ to normalize the routing weights over the activated experts.
Consequently, even when low-logit experts are activated, they contribute minimally to the representation $\h_{l+1}$ and are therefore functionally suppressed, reducing $K_{\mathsf{eff}}$. 
This reduction is \textit{adaptive}: it is determined by $\xi_{l,e}$ and can vary across tokens and layers, as visualized shortly in Sec.\ref{sec.routing-visual}.
Viewed jointly, MirrorMoE router $\G_l$ serves two complementary roles. As in standard routing, it governs exploitation by selecting experts with large logits. More importantly, it leverages small logits, which standard routing discards, to determine how many experts to suppress.

Lastly, we make two practical modifications to the pretraining procedure to further improve performance. First, because permanently suppressing experts could limit the expressiveness of the final model, we gradually anneal $\rho$ to $0$ over the course of training in \eqref{eq.routing_prob}. A cosine schedule is found effective in our experiments; see Sec.\ref{sec.num.ablation}. This yields a temporally strategic exploration scheme: the routing space gradually expands only after experts have developed more stable specialization.

Second, a modest additional improvement can be achieved by balancing the probabilities $\xi_{l,e}(\h_l^n)$ rather than the mirror-routing probabilities $p_{l,e}(\h_l^n)$. Specifically, modify the load-balancing loss to
\begin{align}\label{eq.load-balance-loss}
	\mathcal{L}_{\mathrm{bal}} = R\sum_{l=1}^L \sum_{e=1}^R f_{l,e} \bar{q}_{l,e}, ~~~ \text{with} ~~~f_{l,e}  = \frac{c_{l,e}}{N(K-S)}, ~~\bar{q}_{l,e} = \frac{1}{N} \sum_{n=1}^N \xi_{l,e}(\h_l^n)
\end{align}
where $N$ is the number of tokens and $c_{l,e}$ is the number of tokens routed to expert $e$ at $l$th layer. The intuition is that experts with small $\xi_{l,e}(\h_l^n)$ contribute little to the resulting token representations, even if activated. The load-balancing objective therefore focuses on the experts that are functionally active. As shown in App.\ref{apdx.load-balancing}, this modification improves performance. The complete MirrorMoE pretraining procedure is summarized in Alg.\ref{alg.mirrormoe}.

\begin{algorithm}[t]
\caption{MirrorMoE training iteration $t$}
\label{alg.mirrormoe}
\begin{algorithmic}[1]
\State \textbf{Forward:} $\rho_t \gets \operatorname{CosineAnneal}(\rho_0, t)$
\Statex Select experts for each token using \eqref{eq.routing_prob} and \eqref{eq.mirror-activated}, and aggregate their outputs using \eqref{eq.mirror-agg}
\Statex Compute $\mathcal{L}\gets\mathcal{L}_{\mathrm{task}}+\lambda_{\mathrm{bal}}\mathcal{L}_{\mathrm{bal}}$, where $\mathcal{L}_{\mathrm{bal}}$ is defined in \eqref{eq.load-balance-loss}
\State \textbf{Backward:} Backpropagate $\mathcal{L}$ and update the model parameters
\end{algorithmic}
\end{algorithm}

\subsection{Visualization of routing space reduction in MirrorMoE}\label{sec.routing-visual}

Next, we empirically show that MirrorMoE shrinks the exploration space early in training and ultimately learns a more confident routing policy, indicating stronger expert specialization. The notion of \emph{effective experts} is introduced to quantify this. 

\begin{definition}[(Token-wise) effective experts]\label{def.eff-experrts}
	For a given token $\h_{l}^n$, recall the mirror aggregation in \eqref{eq.mirror-agg}. The effective experts for this token is defined as 
    \begin{align*}
        K_{\mathsf{eff}}(\h_{l}^n) = \exp \Big(- \sum_{e \in {\cal A}_l(\h_{l}^n)} \bar{\xi}_{l,e}(\h_{l}^n) \log  \bar{\xi}_{l,e}(\h_{l}^n)\Big), ~~\text{where}~~  \bar{\xi}_{l,e}(\h_{l}^n) =  \frac{\xi_{l,e}(\h_{l}^n)}{\sum_{j  \in {\cal A}_l(\h_{l}^n) } \xi_{l,j}(\h_{l}^n)}.
    \end{align*}
\end{definition}
We omit the dependence on $\h_{l}^n$ for convenience.
This definition measures how many of the activated experts contribute substantially to the aggregated representation. In other words, $K_{\mathsf{eff}}(\h_{l}^n)$ is a smooth approximation to the number of functionally active experts, excluding shared ones. If $\bar{\xi}_{l,e}=1/(K-S)$ for every $e\in{\cal A}_l$, then $K_{\mathsf{eff}}=K-S$, indicating that all routed experts contribute significantly to the aggregation. Conversely, if $\bar{\xi}_{l,e}=1$ for one expert and zero for others, then $K_{\mathsf{eff}}=1$, indicating that the final token representation is determined by a single expert. 
Intuitively, if roughly $K_{\mathsf{eff}}$ experts contribute, the number of expert combinations to explore scales as ${R\choose K_{\mathsf{eff}}}$, and smaller $K_{\mathsf{eff}}$ corresponds to fewer ``active'' experts and a smaller effective exploration space.

We plot $ K_{\mathsf{eff}}$, averaged over all tokens and layers, against iterations in Fig.\ref{fig.effpath}. We train an MoE with 100M parameters and vary $K\in\{6,8,10\}$. Alongside $ K_{\mathsf{eff}}$, we report the minimum normalized aggregation weight among the activated experts, $\min_{e\in{\cal A}_l}\bar{\xi}_{l,e}$, also averaged over tokens and layers.
It can be seen that MirrorMoE has smaller $ K_{\mathsf{eff}}$ than the baseline during the first half of training. Meanwhile, $\min_{e\in{\cal A}_l}\bar{\xi}_{l,e}$ approaches $0$, showing that some activated experts contribute negligibly to the aggregated representation. Together, these results confirm that MirrorMoE reduces exploration space by functionally suppressing activated experts.
We further demonstrate that the adaptivity of the suppression in App.\ref{apdx.num.eff-exp} by plotting the layer-wise version of Fig.\ref{fig.effpath}.
Lastly, as $ K_{\mathsf{eff}}$ is the exponential entropy of the aggregation probabilities, its smaller value at the end of training in Fig.\ref{fig.effpath} shows that MirrorMoE learns a more concentrated routing policy, consistent with stronger expert specialization.

\begin{figure}[t]
    \centering
    \begin{tabular}{@{}c c c c@{}}
        & \includegraphics[width=0.30\textwidth]{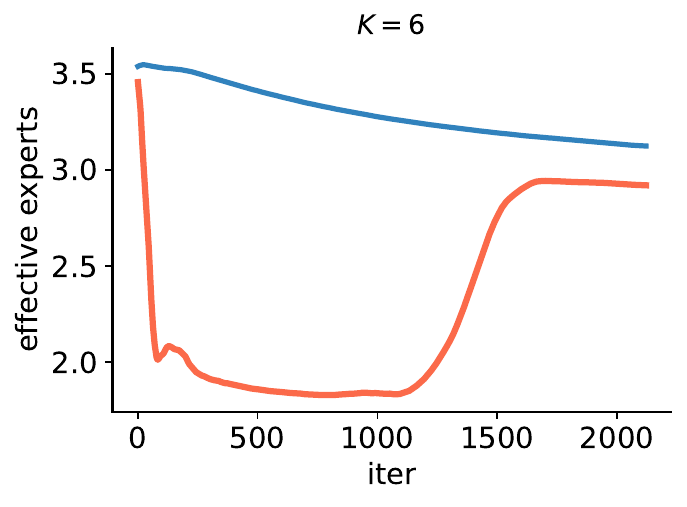}
        & \includegraphics[width=0.30\textwidth]{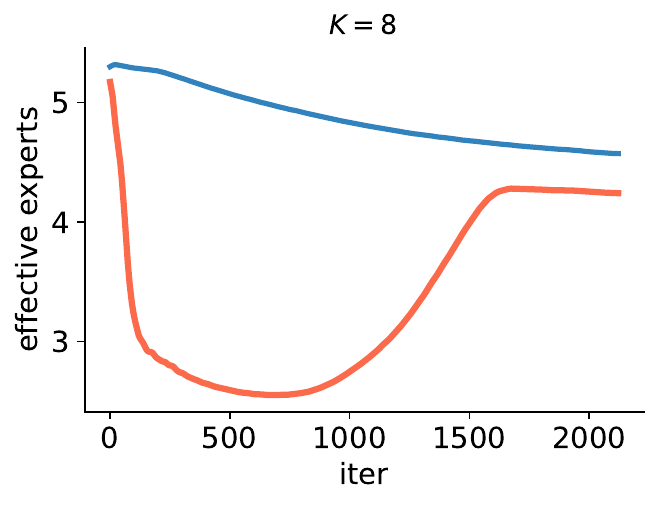}
        & \includegraphics[width=0.30\textwidth]{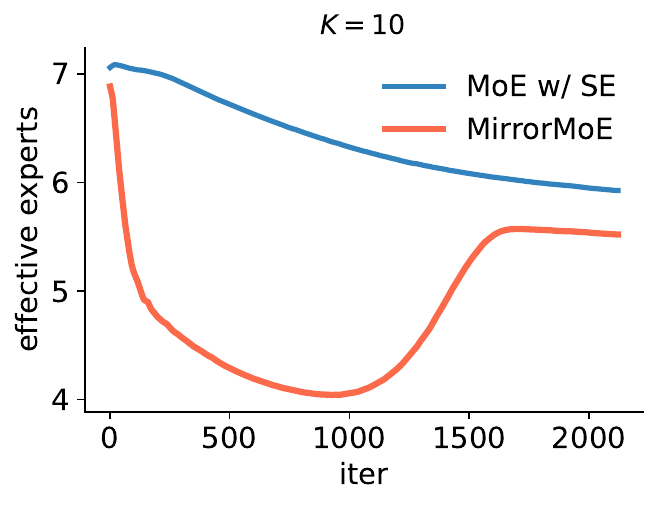} \\
        & \includegraphics[width=0.30\textwidth]{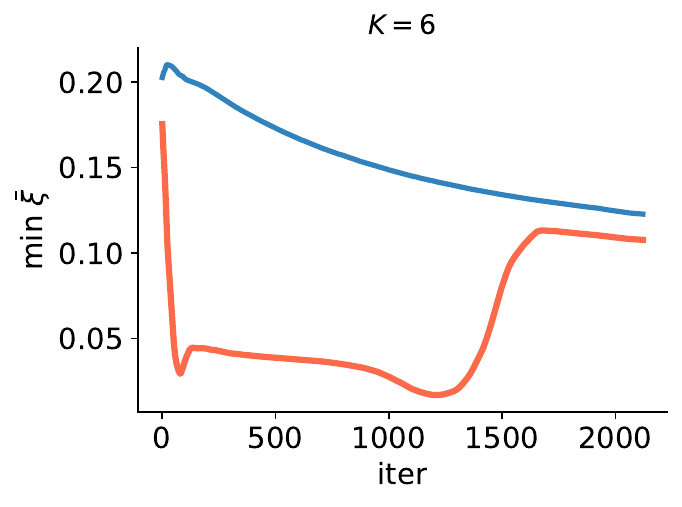}
        & \includegraphics[width=0.30\textwidth]{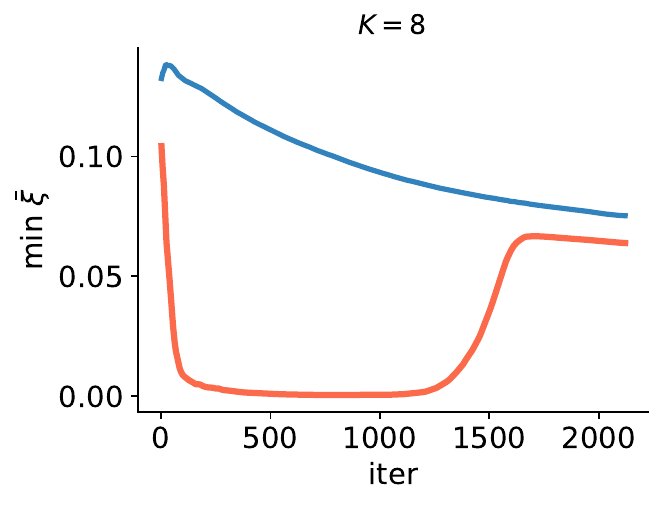}
        & \includegraphics[width=0.30\textwidth]{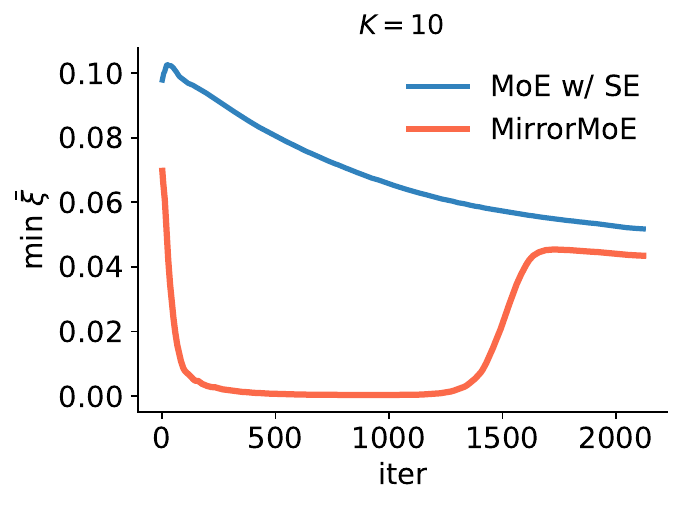} \\
    \end{tabular}
    \vspace{-0.3cm}
    \caption{Preview of how MirrorMoE in Alg.\ref{alg.mirrormoe} actively suppresses experts on a 100M MoE model. The rebound in $K_{\mathsf{eff}}$ results from annealing $\rho$.}
    \label{fig.effpath}
\end{figure}

\subsection{Discussions on adaptivity of MirrorMoE}

Lastly, we theoretically characterize the \textit{adaptivity} of MirrorMoE by deriving \textit{sufficient} conditions for both expert suppression and preservation at mirror routing. We begin with the former. When an expert has a sufficiently small $\xi_{l,e}$ and $\rho$ is chosen appropriately, MirrorMoE activates this expert but functionally removes it as negligible aggregation weight is assigned. For convenience, we omit the layer index throughout this subsection.

\begin{proposition}[Sufficient condition for expert suppression] \label{prop.path-pruning}

	For a given token, let $\{\xi_e\}$ and $\{p_e\}$ be probabilities obtained as \eqref{eq.routing_prob}.
	Let ${\cal A}^{p}:= {\cal A }$, i.e., \eqref{eq.mirror-activated}, denote the set of experts activated by MirrorMoE,
	and define ${\cal A}^{\xi}:=\topk_{K-S}(\{\xi_e\}_{e=1}^R)$ as the set that would be activated under the standard routing probabilities $\{\xi_e\}$.
	Let $e_{\min}:=\arg\min_{e\in[R]}\xi_e$ with $\xi_{e_{\min}}=\epsilon$, and let $j:=\arg\min_{e\in{\cal A}^{\xi}}\xi_e$ with $\xi_j=c/R$. As $\epsilon\to0$, $\rho>c/(1+c)$ is sufficient for $e_{\min}\in{\cal A}^{p}\setminus{\cal A}^{\xi}$; that is, MirrorMoE activates this expert while functionally suppressing its contribution during aggregation.

\end{proposition}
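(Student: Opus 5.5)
The plan rests on the closed-form relation between the mirror distribution $\zeta$ and the standard distribution $\xi$. Since $\exp(-\G[e,:]\h)=\bigl(\sum_{k}\exp(\G[k,:]\h)\bigr)^{-1}/\xi_e$, we have
\begin{align*}
\zeta_e=\frac{1/\xi_e}{\sum_{k=1}^R 1/\xi_k}, \qquad\text{so}\qquad \zeta_{e_{\min}}=\Bigl(1+\sum_{k\neq e_{\min}}\frac{\epsilon}{\xi_k}\Bigr)^{-1}.
\end{align*}
I interpret the limit $\epsilon\to0$ as follows: the other probabilities $\xi_k$, $k\neq e_{\min}$, stay bounded away from $0$. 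This makes $e_{\min}$ the unique minimizer, and I would state this assumption explicitly, since ties at level $\epsilon$ break the argument. Under it, $\zeta_{e_{\min}}\to1$, and consequently $\zeta_e\le 1-\zeta_{e_{\min}}\to0$ for every $e\neq e_{\min}$. Plugging into \eqref{eq.routing_prob} gives
\begin{align*}
p_{e_{\min}}=(1-\rho)\epsilon+\rho\,\zeta_{e_{\min}}\to\rho, \qquad p_e=(1-\rho)\xi_e+o(1)\quad (e\neq e_{\min}).
\end{align*}

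Next I show $e_{\min}\in{\cal A}^{p}$. It suffices to show that, for small $\epsilon$, $p_{e_{\min}}$ strictly exceeds $p_e$ for $e=j$ and for every $e\notin{\cal A}^{\xi}\cup\{e_{\min}\}$. Then at most $|{\cal A}^{\xi}\setminus\{j\}|=K-S-1$ experts can have larger $p$, so $e_{\min}$ lies in $\topk_{K-S}(\{p_e\})$.

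All the experts to be beaten satisfy $\xi_e\le\xi_j=c/R$: for $j$ by definition, and for experts outside ${\cal A}^{\xi}$ because ${\cal A}^{\xi}$ collects the largest $\xi$'s. Their limits therefore satisfy $\lim p_e\le(1-\rho)c/R\le(1-\rho)c$. The hypothesis $\rho>c/(1+c)$ is equivalent to $\rho>(1-\rho)c$, so $\lim p_{e_{\min}}=\rho$ strictly exceeds every such limit. Since there are finitely many experts, a single $\epsilon_0$ makes all these strict inequalities hold for $\epsilon<\epsilon_0$. The stated condition is in fact stronger than needed, because $(1-\rho)c/R$ would suffice; I would remark on this rather than try to sharpen it.

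Finally, $e_{\min}\notin{\cal A}^{\xi}$, because $e_{\min}$ is the unique minimizer of $\xi$ and $K-S<R$. Its aggregation weight in \eqref{eq.mirror-agg} is $\xi_{e_{\min}}=\epsilon\to0$, which is the functional suppression claimed.

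The main obstacle is controlling $\zeta_e$ for the competing experts. Some of them could also have very small $\xi_e$ and then carry non-negligible mirror mass. This is handled by the uniqueness and bounded-away assumption through $\zeta_e\le1-\zeta_{e_{\min}}$, and making that assumption precise is the delicate part of the write-up.
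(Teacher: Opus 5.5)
Your argument is sound under the hypothesis you add (all $\xi_k$ with $k\neq e_{\min}$ bounded away from $0$). Its skeleton matches the paper's: the closed form $\zeta_e\propto 1/\xi_e$, beating $j$ and every expert outside ${\cal A}^{\xi}$, then counting. But that hypothesis is not in the statement, and the statement does not need it. Without it your key step $\zeta_{e_{\min}}\to1$ fails. If another non-activated expert has $\xi_{e'}=2\epsilon$, then $\zeta_{e_{\min}}\to2/3$ and $\zeta_{e'}\to1/3$, so neither $\lim p_{e_{\min}}=\rho$ nor $\zeta_{e'}\to0$ holds, yet the proposition is still true. The $R$-free threshold $c/(1+c)$ is exactly what a worst-case bound produces when it needs no control of the other small $\xi$'s. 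Your remark that $c/(R+c)$ would suffice is a symptom that you are proving a narrower claim, not evidence that the stated condition is loose.

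Two ingredients are missing. First, since $\epsilon$ is the minimum, $\sum_r 1/\xi_r\le R/\epsilon$. This gives $\zeta_{e_{\min}}-\zeta_j\ge\frac1R(1-\epsilon/\xi_j)$ and therefore
\[
p_{e_{\min}}-p_j\ \ge\ -(1-\rho)\xi_j+\frac{\rho}{R}\Bigl(1-\frac{\epsilon}{\xi_j}\Bigr).
\]
The right-hand side is positive as soon as $\rho>c/(c+1-\epsilon R/c)$, hence for all small $\epsilon$ once $\rho>c/(1+c)$, with nothing assumed about the remaining $\xi$'s.

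Second, the obstacle you flag (competitors with small $\xi_e$ carrying mirror mass) is resolved by convexity, not by assumption. With $C=\bigl(\sum_r 1/\xi_r\bigr)^{-1}$ common to all experts, $p_e=\phi(\xi_e)$ for $\phi(x)=(1-\rho)x+\rho C/x$, which is convex on $(0,\infty)$. Every $e\notin{\cal A}^{\xi}$ has $\xi_e\in[\epsilon,\xi_j]$, so $p_e\le\max\{\phi(\epsilon),\phi(\xi_j)\}=p_{e_{\min}}$, strictly unless $\xi_e=\epsilon$. Your counting step then goes through verbatim. Only exact ties at level $\epsilon$, where $\arg\min$ is itself ambiguous, need a tie-breaking convention.
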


This explains the reduced $K_\mathsf{eff}$ as visualized in Fig.\ref{fig.effpath}.
Next, we show the token adaptivity of MirrorMoE by identifying a regime in which no expert is suppressed. Specifically, when the router is uncertain about its choices, MirrorMoE behaves similarly to standard routing.

\begin{proposition}[Sufficient condition for expert preservation]\label{prop.no-path-pruning}
Let $\xi_{\max}:=\max_{e\in[R]}\xi_e$, $\xi_{\min}:=\min_{e\in[R]}\xi_e$, and $\kappa\!:=\!\sqrt{(1-\rho)/\rho}$. If $\xi_{\max}/\xi_{\min}\leq\kappa$, ${\cal A}^{p}\!=\!{\cal A}^{\xi}$. Moreover, with $\bar{\xi}_e$ defined in Def.\ref{def.eff-experrts}, it satisfies $\bar{\xi}_e\geq 1/((K\!-\!S)\kappa), \forall e\in{\cal A}^{p}$, i.e., no selected expert is functionally suppressed.
\end{proposition}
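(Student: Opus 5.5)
The plan is to rewrite the mirror component $\zeta_e$ directly in terms of $\xi_e$, so that $p_e$ becomes an explicit scalar function of $\xi_e$. We then check that this function preserves the order of the $\xi_e$ whenever $\xi_{\max}/\xi_{\min}\le\kappa$.

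\textbf{Step 1: express $p_e$ through $\xi_e$.} Write $g_e:=\G[e,:]\h$ and $Z_+:=\sum_j \exp(g_j)$. Then $\exp(-g_e)=1/(Z_+\xi_e)$, and therefore
\begin{align*}
\zeta_e=\frac{1/\xi_e}{\sum_{j=1}^R 1/\xi_j}=\frac{C}{\xi_e},\qquad C:=\Big(\sum_{j=1}^R \xi_j^{-1}\Big)^{-1}.
\end{align*}
Hence $p_e=f(\xi_e)$ with $f(x):=(1-\rho)x+\rho C/x$, and the same function $f$ applies to every expert.

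\textbf{Step 2: order preservation.} For $x>y>0$ we have
\begin{align*}
f(x)-f(y)=(x-y)\Big[(1-\rho)-\frac{\rho C}{xy}\Big].
\end{align*}
This is positive as soon as $xy>\rho C/(1-\rho)=C/\kappa^2$. Since $xy\ge\xi_{\min}^2$, it suffices to show $\kappa^2\xi_{\min}^2> C$. Two elementary bounds give this:
\begin{enumerate}
\item Since $\sum_j\xi_j=1$, the AM--HM inequality gives $\sum_j 1/\xi_j\ge R^2$, i.e. $C\le 1/R^2$.
\item Since $\xi_{\max}\ge 1/R$, the hypothesis gives $\xi_{\min}\ge\xi_{\max}/\kappa\ge 1/(R\kappa)$, so $\kappa^2\xi_{\min}^2\ge 1/R^2\ge C$.
\end{enumerate}
Thus $f$ is nondecreasing across all pairs of values that actually occur among the $\xi_e$. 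Consequently the top-$(K-S)$ sets coincide, ${\cal A}^p={\cal A}^\xi$.

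\textbf{Main obstacle: the boundary case.} The difficulty is the equality case $\kappa^2\xi_{\min}^2=C$, where $f(x)=f(y)$ is possible. Equality in both bounds forces $\xi_e\equiv 1/R$ and $\kappa=1$. In that case all $\xi_e$ are equal anyway, so the two top-$k$ sets agree under a common tie-breaking rule. Otherwise at least one inequality is strict and $f$ is strictly increasing on the relevant range. I would handle this with a short remark fixing a consistent tie-breaking convention for $\topk$.

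\textbf{Step 3: no suppression.} For $e\in{\cal A}^p={\cal A}^\xi$,
\begin{align*}
\bar\xi_e=\frac{\xi_e}{\sum_{j\in{\cal A}^p}\xi_j}\ge\frac{\xi_{\min}}{(K-S)\,\xi_{\max}}\ge\frac{1}{(K-S)\kappa},
\end{align*}
using $|{\cal A}^p|=K-S$ and the hypothesis $\xi_{\max}/\xi_{\min}\le\kappa$. This completes the proposition.
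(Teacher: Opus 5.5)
Your proof is correct and follows the paper's argument essentially step for step: you write $\zeta_e=C/\xi_e$, factor $p_i-p_j=(\xi_i-\xi_j)\bigl[(1-\rho)-\rho C/(\xi_i\xi_j)\bigr]$, bound $C\le 1/R^2$ by AM--HM/Cauchy--Schwarz, use $\xi_{\max}\ge 1/R$, and finish with the same bound on $\bar\xi_e$. Your boundary-case remark is a legitimate refinement, since the paper asserts a strict inequality $(\xi_{\max}/\xi_{\min})^2<\kappa^2$ that the non-strict hypothesis does not give; you could shorten your case analysis by noting that $xy>\xi_{\min}^2$ holds strictly whenever $x>y\ge\xi_{\min}$.
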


\textbf{Another interpretation for MirrorMoE.} 
The above discussions provide an alternative interpretation of MirrorMoE beyond exploration. 
In MoEs with large $E$ and $K$, not every token may require the full expressiveness from all activated experts. 
If a token can be learned well using fewer experts, the capacity of remaining experts can be devoted to other tokens.
MirrorMoE can therefore be viewed as an approach for tokens to adaptively determine their number of activated experts, where this adaptivity arises from the expert suppression and preservation shown in Props. \ref{prop.path-pruning} and \ref{prop.no-path-pruning}, respectively.

\section{Numerical Results}\label{sec.num}

This section evaluates the performance of MirrorMoE. Our implementation builds on \citep{kang2025flame} and uses the Megatron-LM framework \citep{megatron-lm} with the DCLM dataset \citep{li2024datacomp}. All experiments are conducted on NVIDIA H100 GPUs. 
\faGithub\ Code is available at
\url{https://github.com/BingcongLi/MirrorMoE/tree/main}.

\subsection{Pretraining performance of MirrorMoE}

We evaluate four shared-expert MoEs ranging from 100M to 2.4B total parameters, with $R \in\{64,128\}$ experts. For each token, $6$ routed experts and $S=2$ shared experts are activated, yielding $K=8$. Activated parameters account for $16\%$ to $38\%$ of the total parameters across models. The model configurations are summarized in Tab.\ref{tab.numerical.test_ppl} and detailed in App.\ref{sec.apdx.model-config}.

\begin{table}[t]
\centering
\vspace{-0.2cm}
\caption{Pretraining performance of MirrorMoE across scales. D/S in the depth column denotes dense/sparse Transformer layers.}
\vspace{-0.25cm}
\small
\setlength{\tabcolsep}{10pt}
\renewcommand{\arraystretch}{1.15}
\resizebox{\textwidth}{!}{
\begin{tabular}{ccccccc}
\toprule
\textbf{Model}
& {Experts}
& {Depth}
& \multicolumn{2}{c}{\textbf{PPL} ($\downarrow$)}
& \multicolumn{2}{c}{\textbf{iteration time (ms)} ($\downarrow$) }   
\\
\cmidrule(lr){4-5}
\cmidrule(lr){6-7}
{Total-activated params.}
& \textbf{$K$/$R$/$S$}
& 
& {Vanilla}
& \cellcolor{blue!10} {\textbf{MirrorMoE}}
& {Vanilla}
& \cellcolor{blue!10} {\textbf{MirrorMoE}} \\
\midrule
MoE-100M-A38M
& 8/64/2
& 1D + 8S
& 54.78
& \cellcolor{blue!10}  \textbf{51.77}
& 7871.3 $\pm$ 65.9
& \cellcolor{blue!10} \textbf{7844.4} $\pm$ 	85.6 \\

MoE-498M-A121M
& 8/64/2
& 1D + 12S
& 28.31 
& \cellcolor{blue!10}  \textbf{27.36}
& 13550.0 $\pm$ 63.0
& \cellcolor{blue!10} \textbf{13426.7} $\pm$ 83.3 \\

MoE-625M-A100M
& 8/128/2
& 1D + 8S
& 28.46
& \cellcolor{blue!10}  \textbf{27.84}
& 9527.5 $\pm$ 45.4
& \cellcolor{blue!10} \textbf{9462.5} $\pm$ 61.2 \\

MoE-2.4B-A270M
& 8/128/2
& 1D + 15S
& 21.17
& \cellcolor{blue!10} \textbf{20.50}
& 25711.1 $\pm$ 176.2
& \cellcolor{blue!10} \textbf{25635.4} $\pm$ 109.4 \\
\bottomrule
\end{tabular}
}
\label{tab.numerical.test_ppl}  
\vspace{-0.2cm}
\end{table}

\textbf{Perplexity.} We first compare the test perplexity of pretrained MoE models in Tab.\ref{tab.numerical.test_ppl}.
MirrorMoE consistently achieves lower perplexity across all model scales.
Notably, MoE-498M-A121M and MoE-625M-A100M use the same hidden dimensions and differ primarily in numbers of layers and experts. Since the number of end-to-end routing paths scales as ${E-S \choose K-S}^{L}$, where $L$ denotes the number of sparse layers, MoE-498M-A121M has a larger routing space under our configurations. MirrorMoE also yields a larger performance gain on this model, suggesting that shrinking the routing space is particularly beneficial when the exploration dilemma is more pronounced.

\begin{table*}[t]
\centering
\caption{Downstream performance of MoE models trained with MirrorMoE. }
\label{tab.downstream}
\vspace{-0.3cm}
\setlength{\tabcolsep}{5.5pt}
\renewcommand{\arraystretch}{1.1}
\resizebox{0.98\textwidth}{!}{
\begin{tabular}{lcccccccc}
\toprule
\textbf{Model}
& \textbf{Method}
& {ARC-E}
& {OBQA}
& {HellaSwag}
& {PIQA}
& {BoolQ}
& {SIQA}
& {Avg} ($\uparrow$)
\\
\midrule

\multirow{2}{*}{MoE-100M-A38M}
& Vanilla
& 32.11 & 26.40 & 26.62 & 57.56 & 42.66 & 34.29 & 36.61 \\
&\cellcolor{blue!10}MirrorMoE
& \cellcolor{blue!10}34.22 & \cellcolor{blue!10}26.00 & \cellcolor{blue!10}25.76 & \cellcolor{blue!10}58.00 & \cellcolor{blue!10}42.48  & \cellcolor{blue!10}34.49 & \cellcolor{blue!10}\textbf{36.82}  \\
\midrule

\multirow{2}{*}{MoE-498M-A121M}
& Vanilla
& 39.44 & 26.80 & 29.87 & 61.43 & 58.65 & 36.69 & 42.15 \\
& \cellcolor{blue!10}MirrorMoE
& \cellcolor{blue!10}41.04 & \cellcolor{blue!10}27.80 & \cellcolor{blue!10}30.33 & \cellcolor{blue!10}64.31 & \cellcolor{blue!10}59.20 & \cellcolor{blue!10}36.95 & \cellcolor{blue!10}\textbf{43.27} \\
\midrule

\multirow{2}{*}{MoE-625M-A100M}
& Vanilla
& 41.12 & 28.20 & 29.97 & 61.10 & 61.47  & 37.36 & 43.20 \\
& \cellcolor{blue!10}MirrorMoE
& \cellcolor{blue!10}41.37 & \cellcolor{blue!10}29.80 & \cellcolor{blue!10}30.23 & \cellcolor{blue!10}62.08 & \cellcolor{blue!10}61.16  & \cellcolor{blue!10}36.59 & \cellcolor{blue!10}\textbf{43.53} \\
\midrule

\multirow{2}{*}{MoE-2.4B-A270M}
& Vanilla
& 45.37 & 30.00 & 36.63 & 66.59 & 54.50  & 38.33 & 45.23 \\
& \cellcolor{blue!10}MirrorMoE
& \cellcolor{blue!10}46.63 & \cellcolor{blue!10}32.40 & \cellcolor{blue!10}37.40 & \cellcolor{blue!10}67.95 & \cellcolor{blue!10}58.53  & \cellcolor{blue!10}38.28 & \cellcolor{blue!10}\textbf{46.87} \\
\bottomrule
\end{tabular}
}
\vspace{-0.2cm}
\end{table*}

\textbf{Runtime.} MirrorMoE slightly reduces the per-iteration training time despite introducing an additional softmax operation in \eqref{eq.routing_prob} at each layer. It suggests improved load balance. Note that MoE-100M-A38M is trained on two GPUs, whereas all other models are trained on four.

\begin{wrapfigure}{r}{0.3\columnwidth}
    \centering
   \vspace{-0.3cm}
    \includegraphics[width=\linewidth]{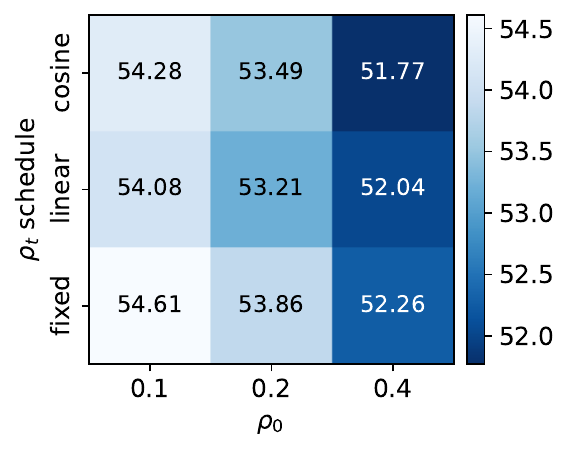}
    \vspace{-0.8cm}
    \captionof{figure}{Hyperparameters.}
    \label{fig.hyperparams}

    \vspace{-0.15cm}

    \captionof{table}{Comparison with non-adaptive schedules.}
    \resizebox{0.99\linewidth}{!}{
    \begin{tabular}{lc}
        \toprule
        Method & Test PPL ($\downarrow$) \\
        \midrule
        MoE & 54.78 \\
        Linear scheduled $\widetilde{K}_t$ & 54.91 \\
        Cosine scheduled $\widetilde{K}_t$ & 54.19 \\
        MirrorMoE & \textbf{51.77} \\
        \bottomrule
    \end{tabular}
    }
    \label{tab.num.adaptivity}
    \vspace{-1.5cm}
\end{wrapfigure}

\textbf{Downstream tasks.} We further demonstrate that the gains of MirrorMoE translate into improved downstream performance on six commonsense reasoning tasks in Tab.\ref{tab.downstream}. For the 2.4B model, MirrorMoE improves the average performance by 1.64.

\subsection{Comparison under enlarged routing space}

Building on the same model configuration, we vary the number of activated experts as $K\in\{6,8,10\}$. Increasing $K$ enables richer expert combinations, thereby enlarging both the exploration space and model expressiveness. As shown in Fig.\ref{fig.scaling}, the standard shared-expert MoE is outperformed by its non-shared counterpart at large $K$. In contrast, MirrorMoE mitigates the exploration dilemma, enabling the shared-expert model to consistently outperform the non-shared MoE across all tested cases. More importantly, the perplexity of MirrorMoE consistently decreases as $K$ increases, suggesting that the additional expressiveness enabled by larger $K$ is realized.

\subsection{Additional experiments}\label{sec.num.ablation}

We use a MoE-100M-A38M with $K=8$ (2 shared and 6 routed experts) for ablation study.

\textbf{Hyperparameters.} Fig.\ref{fig.hyperparams} compares constant, linear, and cosine schedules for $\rho_t$ in \eqref{eq.routing_prob}, together with initial values $\rho_0\in\{0.1,0.2,0.4\}$. A cosine schedule with $\rho_0=0.4$ achieves the best performance and is therefore used in all preceding experiments.

\textbf{Benefit of adaptivity.} We demonstrate the importance of token/layer adaptivity in MirrorMoE by comparing it with two simple approaches that also have smaller exploration space. Both begin with fewer activated experts and progressively increase their number. Specifically, we set $K=\mathsf{round}(\widetilde{K}_t)$, where $\widetilde{K}_t$ increases from $4$ to $8$ following either a linear or cosine schedule. Here, $\widetilde{K}_t$ includes two shared experts, and the initial value $\widetilde{K}_0=4$ is informed by the number of effective experts in Fig.\ref{fig.effpath}. These baselines use the same $K$ for all tokens and layers and therefore lack adaptivity compared to MirrorMoE. As shown in Tab.\ref{tab.num.adaptivity}, neither matches the performance of MirrorMoE.

\textbf{MirrorMoE is tailored to shared-expert MoEs.} We additionally apply MirrorMoE to MoEs \textit{without} shared experts to further validate our motivation based on the exploration dilemma induced by shared experts. As shown in Tab.\ref{tab.numerical.non-shared-moe} in App.\ref{apdx.moe-ns}, MirrorMoE performs comparably to vanilla training. This is consistent with our analysis: without shared experts, the exploration dilemma is less pronounced, leaving less need to shrink the exploration space.

\section{Conclusion}

This work identifies an exploration dilemma in MoEs that combine fine-grained expert segmentation with shared experts. This tension can prevent the additional expressiveness enabled by fine-grained segmentation from being fully realized while also diminishing the benefits of shared experts. To address this, we propose MirrorMoE, which adaptively shrinks the exploration space to better match the model's exploration capability. MirrorMoE consistently improves performance across scales.

\textbf{Future direction.} Our current approach addresses the exploration dilemma by shrinking the exploration space. A promising future direction is to improve the routing strategy itself for more efficient exploration.


\newpage
\bibliography{refs}
\bibliographystyle{plainnat}

\newpage
\appendix
\begin{center}
	\Large \textbf{Appendix}
\end{center}

\section{Additional related work}\label{apdx.related-work}

\textbf{Expressiveness does not necessarily translate directly into performance.} Beyond those we have discussed earlier in Sec.\ref{sec.related-work}, additional works similarly suggest that the expressive capacity of neural networks may not be fully utilized in practice. For example, \citep{depth-efficient} shows that many deeper layers in Llama 3.1 70B behave close to identity mappings. Since identity mappings are available ``for free'', this motivates redesign of residual connections for scaling deeper \citep{zhang2026ancre}. 

\textbf{MoE systems.} A complementary line of work focuses on efficient MoE systems, including faster and more principled expert parallelism, fine-grained all-to-all communication, and topology-aware communication strategies; see e.g., \citep{hwang2023tutel,liu2025netmoe,zhang2025comet}. These methods aim to improve hardware utilization, communication efficiency, and end-to-end throughput. MirrorMoE is largely orthogonal to these system-level optimizations and can be integrated with them.

\textbf{Routing in MoEs.} A broad line of work studies alternative routing mechanisms for MoEs. Hash Layers \citep{roller2021hash} use hash-based assignments, while BASE Layers \citep{lewis2021base} and expert-choice routing \citep{zhou2022mixture} promote balanced expert utilization through carefully designed token assignments, at the cost of not being strictly autoregressive. \citet{clark2022unified} study scaling laws for routed models. Other approaches replace hard top-$K$ selection with differentiable routing mechanisms \citep{muqeeth2023soft,wang2025remoe,vahidi2026dirmoe}. Several works post-train or fine-tune routers to enable token-adaptive computation \citep{yue2024ada,zeng2024adamoe}, while \citet{kang2025exploration} uses reinforcement learning to improve routing decisions. During pretraining, noisy routing has also been explored to encourage broader expert exploration: \citet{shazeer2017outrageously} introduce Gaussian noise, whereas \citet{nie2021evomoe} use Gumbel noise. Routing has also been explicitly studied for multimodal models \citep{wei2025routing}. In contrast to approaches that encourage broader exploration, MirrorMoE adaptively suppresses experts to shrink the exploration space.
The work of \citet{liang2026not} studies shared experts from an inference perspective, showing that they can hinder expert offloading, partly because shared experts reduce the importance of routed experts. However, their focus is on routing behavior at inference time, whereas we study how shared experts affect routing throughout pretraining.

\section{Experimental details}

\subsection{Training details}\label{sec.apdx.model-config}

The model configurations for the shared-expert MoE baseline and MirrorMoE are summarized in Tab.\ref{tab.shared-expert-moe-configs}. Following \citet{kang2025flame}, we adopt a Llama-style MoE architecture. We also follow DeepSeekMoE \citep{dai2024deepseekmoe,liu2024deepseek} by using a dense FFN in the first Transformer layer, so that the first sparse layer receives sufficiently developed token representations for routing. MoE-100M-A38M is trained on two H100 GPUs, while all other models are trained on four. DCLM dataset \citep{li2024datacomp} is adopted.
We use AdamW for optimization. The learning rate is selected from $\{1,3,5\}\times 10^{-4}$ based on performance after 500 training steps on a held-out validation set. We adopt a warmup-stable-decay (WSD) learning-rate schedule. The global batch size is $1024$, with gradient accumulation used to reduce memory consumption.

The model analyzed in Sec.\ref{sec.analysis} is MoE-498M-A121M.

For MoEs without shared experts, we convert the shared experts into routed experts to preserve the same total parameter count. Due to implementation constraints in Megatron-LM, this requires a different expert layout across GPUs, i.e., a different expert-parallelism configuration, and consequently a different number of GPUs. Specifically, Megatron-LM requires the number of routed experts to be divisible by the expert-parallelism size. Therefore, configurations with 66 or 130 routed experts use three or five GPUs, respectively. The global batch size consequently differs across configurations, while the total number of training tokens is kept approximately the same to ensure a fair comparison.

\subsection{Router Variance under Different Levels of Gumbel Noise}\label{apdx.num.gumbel}

When evaluating $\mathsf{RouterVar}_l$, we replace deterministic top-$K$ (for non-shared MoEs) or top-$(K-S)$ (for shared-expert MoEs) routing with stochastic sampling from the learned routing distribution. Such sampling is equivalent to perturbing the routing logits with unit-scale Gumbel noise ($\beta=1$) and then applying top-$K$ selection \citep{kool2019stochastic,nie2021evomoe}. We further extend this evaluation by defining $\mathsf{RouterVar}_l^\beta$, where the routing logits are perturbed by Gumbel noise parameterized by other $\beta$s. We consider $\beta\in\{0.5,2\}$ and report the resulting $\mathsf{RouterVar}_l^\beta$ for MoE-498M-A121M model
in Fig.\ref{apdx.fig.routing-variance-with-gumbel-beta}. The same trend persists across these perturbation levels, further showing that shared experts consistently reduce routing-induced variance.

\begin{table}[t]
\caption{Model configurations used in our experiments.}
\centering
\resizebox{\textwidth}{!}{
\begin{tabular}{lcccc}
\toprule
\textbf{Parameter} &
\textbf{MoE-100M-A38M} &
\textbf{MoE-498M-A121M} &
\textbf{MoE-625M-A100M} &
\textbf{MoE-2.4B-A270M} \\
\midrule
Dense / MoE layers          & 1 / 8 & 1 / 12 & 1 / 8 & 1 / 15 \\
Layer layout                 & 1D + 8S &
                              1D + 12S &
                              1D + 8S &
                               1D + 15S \\
Hidden size ($d_{\mathrm{model}}$) & 256 & 512 & 512 & 768 \\
Dense FFN hidden            & 1368 & 2736 & 2736 & 3648 \\
MoE FFN hidden (per expert) & 176 & 352 & 352 & 512 \\
Shared expert FFN (2$\times$) & 352 & 704 & 704 & 1024 \\
\# Routed experts           & 64 & 64 & 128 & 128 \\
Top-$k$ (activated)         & 6 & 6 & 6 & 6 \\
Sequence length             & 2048 & 2048 & 2048 & 2048 \\
Load balancing 				& 0.1 & 0.1 & 0.1 & 0.1 \\
\# Training tokens		    & 4.4B & 7.6B & 7.6B & 12.8B \\
\bottomrule
\end{tabular}
}
\label{tab.shared-expert-moe-configs}
\end{table}

\begin{figure}[t]
  \centering
  \begin{subfigure}{0.49\textwidth}
    \centering
    \vspace{-0.1cm}
    \includegraphics[width=\textwidth]{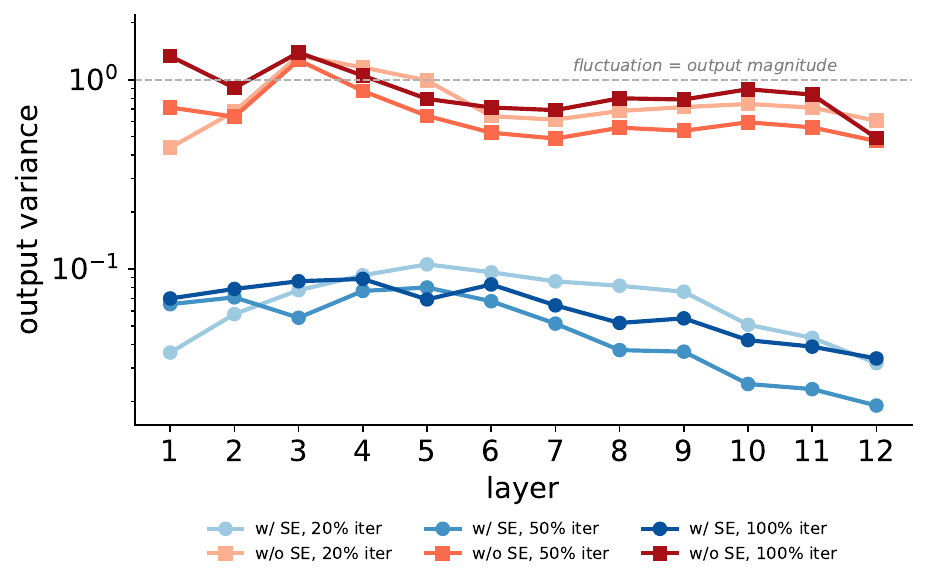}
    \caption{Gumbel noise with $\beta=0.5$}
    \label{fig.ablation-aux-loss}
  \end{subfigure}
  \begin{subfigure}{0.49\textwidth}
    \centering
    \vspace{-0.1cm}
    \includegraphics[width=\textwidth]{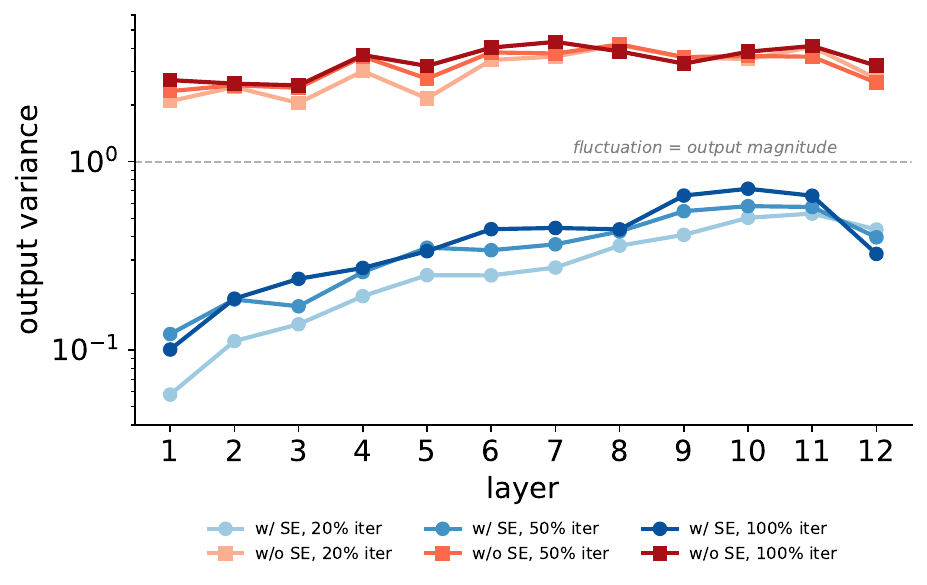}
    \caption{Gumbel noise with $\beta=2$}
    \label{fig.ablation-hyper-param}
  \end{subfigure}
  \caption{Router variance under different levels of Gumbel perturbation.}
  \label{apdx.fig.routing-variance-with-gumbel-beta}
\end{figure}

\subsection{Increasing Exploration during Pretraining Can Be Tricky}\label{apdx.num.pretrain}

An alternative way to address the exploration dilemma is to encourage greater exploration during pretraining. However, this can be challenging for MoEs because stochastic routing may introduce unstable router dynamics and increased training variance. To examine this tradeoff, we consider two alternative routing strategies under $K=8$. First, we replace deterministic top-$(K-S)$ routing with stochastic sampling of $K-S$ experts, denoted as Sto-$(K-S)$ in Tab.\ref{tab.stok}. Second, we deterministically select the top-$2$ experts and stochastically sample the remaining $K-S-2$ experts, denoted as Top-$2$ + Sto-$(K-S-2)$.

As shown in Tab.\ref{tab.stok}, on a MoE-100M-A38M model, deterministic top-$(K-S)$ routing performs best, followed by Top-$2$+Sto-$(K-S-2)$, while fully stochastic routing performs worst. These results suggest that the additional instability introduced by exploration can outweigh its potential benefits during pretraining. For this reason, MirrorMoE takes the opposite approach by shrinking the exploration space.

\begin{table}[t]
\caption{Intrinsic difficulty of increasing exploration}
\centering
\small
\begin{tabular}{lccc}
\toprule
		 & Top-$(K-S)$ & Sto-$(K-S)$ & Top-2 with Sto-$(K-S-2)$ \\
\midrule
PPL      & 54.78   & 55.74   & 55.38 \\
\bottomrule
\end{tabular}
\label{tab.stok}
\end{table}

\subsection{Layer-wise effective experts}\label{apdx.num.eff-exp}

Extending Fig.\ref{fig.effpath}, Fig.\ref{apdx.fig.effpaths} reports the layer-wise number of effective experts, averaged over tokens when training an MoE-100M-A38M model. For example, when $K=8$ in Fig.\ref{fig.K8}, Layers 1 and 2 maintain roughly three effective experts during the first half of training, whereas Layer 3 suppresses more aggressively and approaches a single effective expert. In contrast, Layers 7 and 8 are more conservative, retaining around four effective experts. This confirms that MirrorMoE shrinks the exploration space in a layer-adaptive manner.

\begin{figure}[t]
  \centering
  \begin{subfigure}{0.99\textwidth}
   \includegraphics[width=0.98\textwidth]{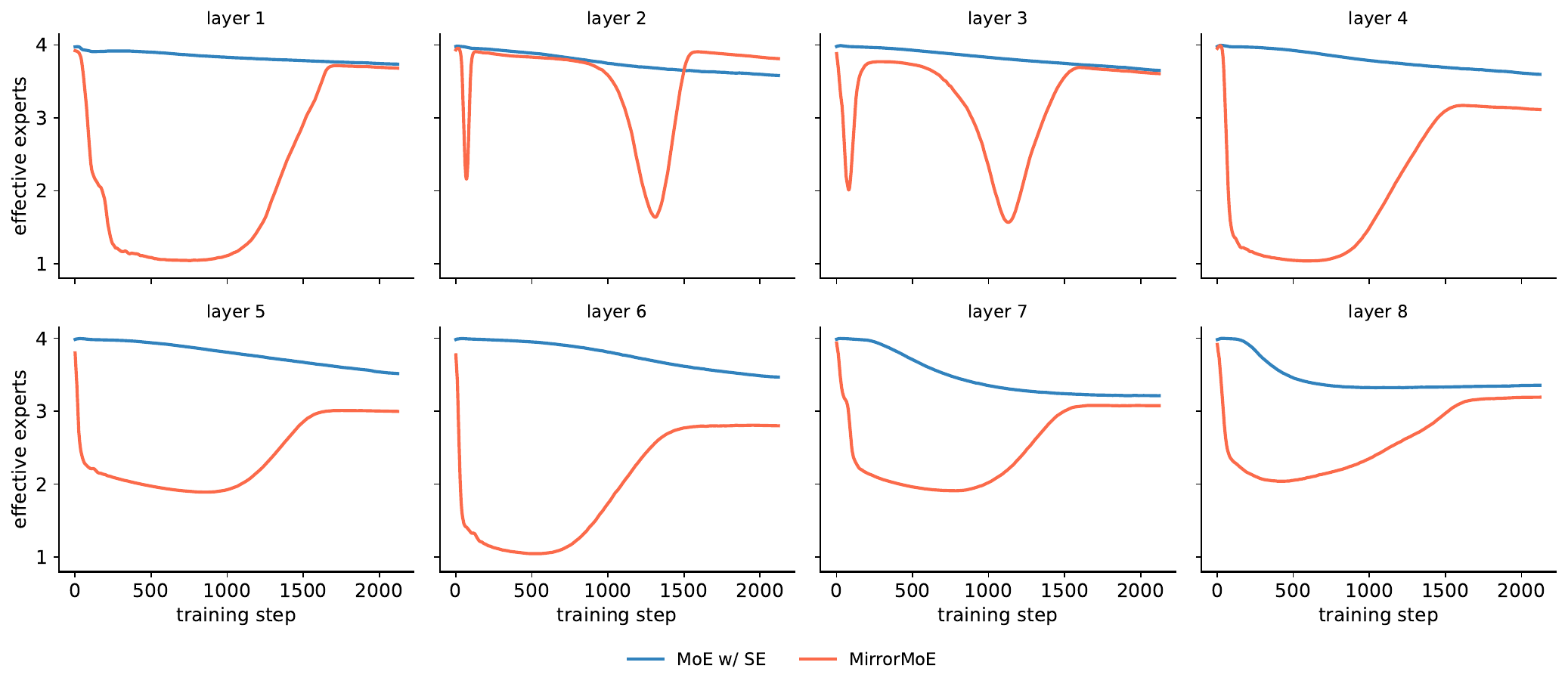}
    \caption{$K=6$}
    \label{fig.K6}
   \end{subfigure}
   
   \begin{subfigure}{0.99\textwidth}
   \includegraphics[width=0.98\textwidth]{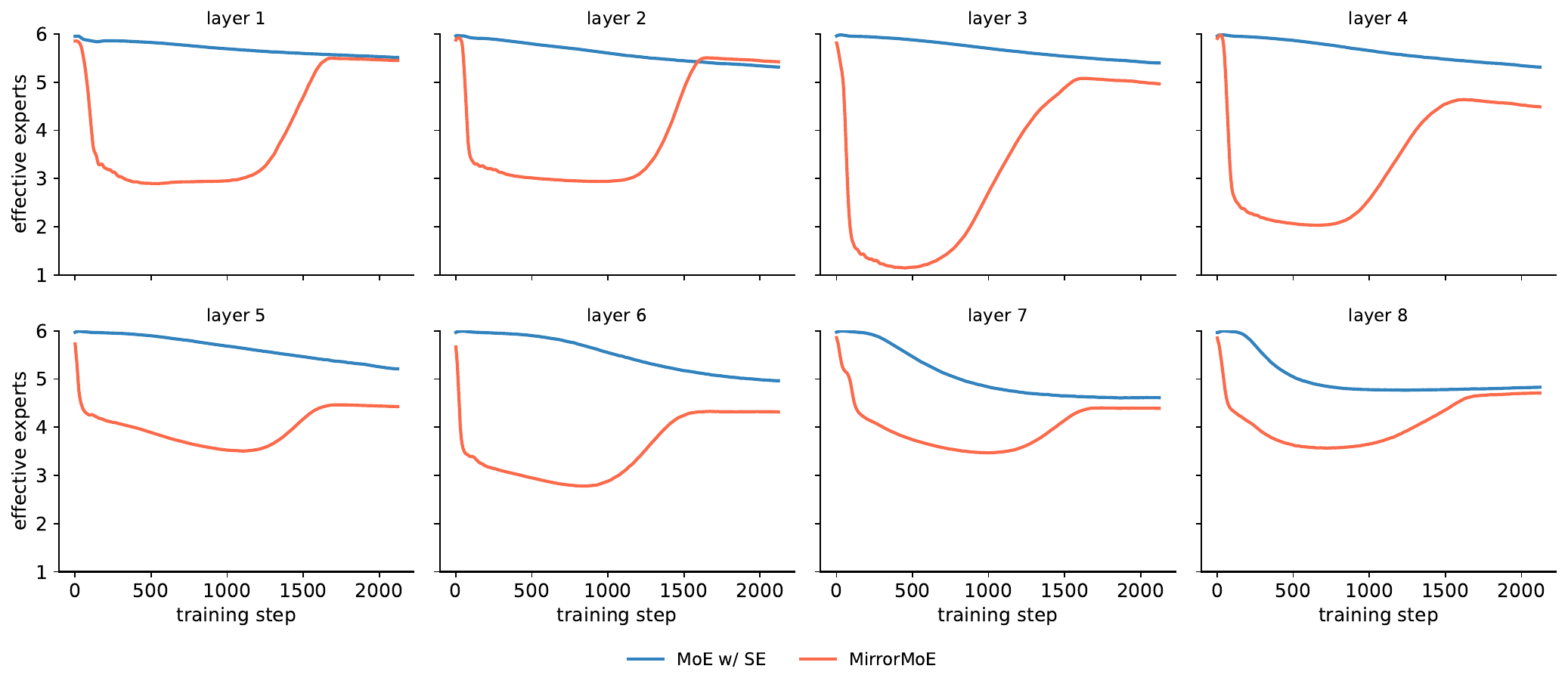}
    \caption{$K=8$}
	\label{fig.K8}
   \end{subfigure}
   
   \begin{subfigure}{0.99\textwidth}
   \includegraphics[width=0.98\textwidth]{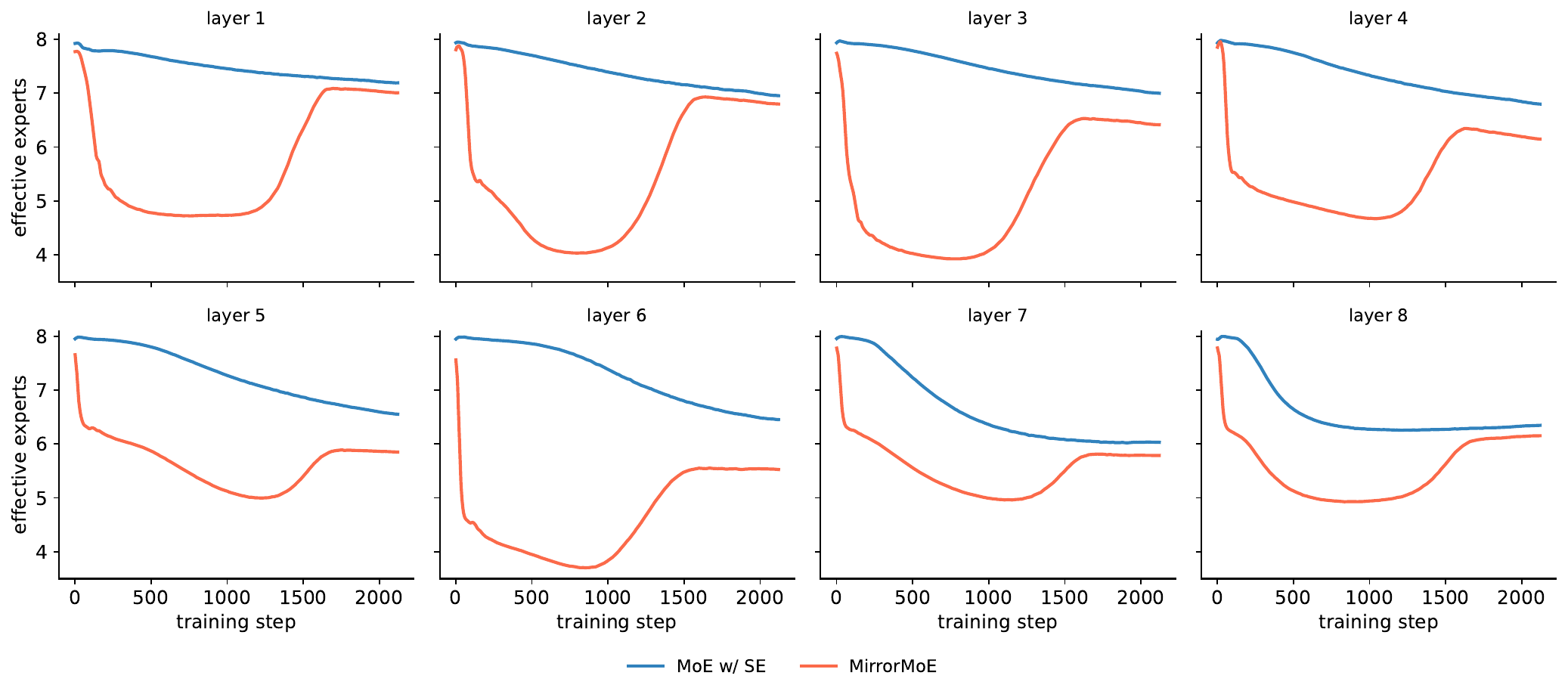}
    \caption{$K=10$}
	\label{fig.K10}
   \end{subfigure}
	\caption{Effective experts vs. iterations plot by layers.}
  \label{apdx.fig.effpaths}

 \end{figure}

\subsection{On the Load-Balancing Loss}\label{apdx.load-balancing}

\begin{table}[t]
\caption{Comparison of different load balancing loss for MirrorMoE}
\centering
\small
\begin{tabular}{lcc}
\toprule
		 & vanilla balancing loss &  our load balancing loss \eqref{eq.load-balance-loss} \\
\midrule
PPL        &  52.94   & 51.77    \\
runtime      & 7727.1 $\pm$  91.8  & 7844.4 $\pm$ 	85.6    \\
\bottomrule
\end{tabular}
\label{apdx.tab.balancing}
\end{table}

We use an MoE-100M-A38M to compare the standard load-balancing loss with our modified loss in \eqref{eq.load-balance-loss}. The two objectives differ in whether they balance the mirror routing probabilities $\{p_{l,e}\}_e$ or the original routing probabilities $\{\xi_{l,e}\}_e$. As shown in Tab.\ref{apdx.tab.balancing}, balancing $\{\xi_{l,e}\}_e$ reduces perplexity, at the cost of a slight increase in per-iteration training time. Since our primary objective is model performance, we adopt the modified load-balancing loss throughout our experiments.

\subsection{MirrorMoE performance degrades for MoEs without shared experts}\label{apdx.moe-ns}

\begin{table}[t]
\centering
\caption{MirrorMoE has less gains when pretraining MoEs without shared experts.}
\small
\setlength{\tabcolsep}{10pt}
\renewcommand{\arraystretch}{1.15}
\begin{tabular}{ccccc}
\toprule
\textbf{Model}
& {Experts}
& {Depth}
& \multicolumn{2}{c}{\textbf{PPL} ($\downarrow$)}
 \\
\cmidrule(lr){4-5}
Total-activated params
& $K$/$R$/$S$
& 
& {Vanilla}
& \cellcolor{blue!10} {\textbf{MirrorMoE}}
\\
\midrule
MoE-NS-97M-A36M
& 8/64/0
& 8S
& 52.01
& \cellcolor{blue!10} {51.23} 
\\

MoE-NS-480M-A115M
& 8/64/0
& 12S
& 28.24
& \cellcolor{blue!10}  {28.20}
\\

MoE-NS-614M-A95M
& 8/128/0
& 8S
& {28.42}
& \cellcolor{blue!10} 28.49
\\

MoE-NS-2.5B-A270M
& 8/128/0
& 16S
& 20.85
& \cellcolor{blue!10} 20.83
\\
\bottomrule
\end{tabular}
\label{tab.numerical.non-shared-moe}  
\end{table}

We further evaluate MirrorMoE on MoEs without shared experts. For ease of parallelization, we use $E\in\{64,128\}$ and $K=8$, with slightly adjusted model depths. These results are therefore not directly comparable to those in Tab.\ref{tab.numerical.test_ppl}. As shown in Tab.\ref{tab.numerical.non-shared-moe}, without shared experts, MirrorMoE with $\rho_0=0.4$ and a cosine schedule performs comparably to standard training, with no significant improvement. This result is consistent with our motivation: when shared experts are absent, the exploration dilemma is less pronounced, leaving less need to shrink the exploration space.

\subsection{Comparison with top-$p$ routing}

We additionally compare MirrorMoE with another adaptive routing approach, namely top-$p$ routing \citep{huang2024harder}. Top-$p$ routing activates the smallest set of experts whose cumulative probability exceeds $p$, where $p \in (0,1)$ is a threshold parameter.
Intuitively, it also adapts the number of activated experts over training. Early in training, when routing probabilities are relatively uniform, more experts are required to exceed the threshold $p$. As routing becomes more concentrated, fewer experts are sufficient.

However, this adaptivity makes a fair comparison challenging. In our experiments, we observe that the average number of activated experts per token decreases over the course of training. As a result, top-$p$ routing uses different amounts of activated computation at different training stages, making direct comparison with MirrorMoE and the baseline difficult, as both use a fixed number of activated experts. Tab.\ref{tab.top-p} reports the number of activated experts (including shared ones) averaged over layers and tokens for MoE-100M-A38M with $K=8$ and $S=2$.

In terms of training performance, we use entropy regularization with a coefficient of $10^{-4}$ for top-$p$ routing. As shown in Tab.\ref{tab.top-p}, even with $p=0.6$, which activates more experts on average, top-$p$ routing still underperforms MirrorMoE.

\begin{table}[t]
\caption{Comparison with top-\(p\) routing. Activated experts are averaged over layers and tokens for top-\(p\) routing.}
\centering
\small
\begin{tabular}{lccc}
\toprule
Method & PPL ($\downarrow$) & \shortstack{\# activated experts\\at first iteration} & \shortstack{\# activated experts\\at last iteration} \\
\midrule
Vanilla    & 54.78 & 8 & 8 \\
MirrorMoE  & \textbf{51.77}  & 8 & 8 \\
Top-\(p\), \(p=0.4\) & 55.64  & 18.65 & 6.73 \\
Top-\(p\), \(p=0.6\) & 54.09  & 29.46 & 10.79 \\
\bottomrule
\end{tabular}
\label{tab.top-p}
\end{table}

\section{Missing proofs}\label{sec.apdx.proofs}

\subsection{$\mathsf{ExpertVar}_l$ upper-bounds routing-induced changes}\label{apdx.derive-ub}

Consider $Z(\h_l)=\sum_{j\in {\cal A}_l(\h_l) } p_{l,j} (\mathbf{h}_l)$. 
To see that $\mathsf{ExpertVar}_l$ bounds the relative change induced by \textit{any} routing policies from above, let $\tilde{\h}_{l+1}(\bm{\alpha})=\sum_{e=1}^R \alpha_e \tilde{\h}_{l+1,e}$ where $\bm{\alpha} \in \mathbb{R}^R$ is any routing probability with $\bm{\alpha} \geq \mathbf{0}$, and $\mathbf{1}^\top \bm{\alpha} = 1$. One can easily see through convexity of $\| \cdot \|^2$ that

\begin{align*}
	\frac{\|\tilde{\h}_{l+1}(\boldsymbol{\alpha})-\h_{l+1}\|^2}{\|\h_{l+1}\|^2}
	&\leq\sum_{e=1}^R \alpha_e\frac{\|\tilde{\h}_{l+1, e}-\h_{l+1}\|^2}{\|\h_{l+1}\|^2} \\
	&\leq\sum_{e=1}^R \frac{\|\tilde{\h}_{l+1, e}-\h_{l+1}\|^2}{\|\h_{l+1}\|^2}
	= R \cdot \mathsf{ExpertVar}_l.
\end{align*}

\subsection{Proofs of Prop. \ref{prop.path-pruning}}

\begin{proof}
Recall that in MirrorMoE, one can write $p_e = (1 - \rho) \xi_e + \rho \zeta_e$. Using the detailed expression of $\xi_e$ and $\zeta_e$ in \eqref{eq.routing_prob}, one can rewrite $\zeta_e = \frac{1/\xi_e}{\sum_{r=1}^R 1/\xi_r}$. Given that $e_{\min}:=\arg\min_{e\in[R]}\xi_e$ with $\xi_{e_{\min}}=\epsilon$, we have that
\begin{align*}
	\sum_{e=1}^R \frac{1}{\xi_e} \leq \frac{R}{\xi_{e_{\min}}}.
\end{align*}

With $j:=\arg\min_{e\in{\cal A}^{\xi}}\xi_e$, it follows that
\begin{align*}
	\zeta_{e_{\min}}-\zeta_j 
	&= \frac{1/\xi_{e_{\min}}-1/\xi_j} {\sum_{r=1}^R 1/\xi_r} 
	\geq \frac{1}{R} \left( 1-\frac{\xi_{e_{\min}}}{\xi_j} \right) \\
	& = \frac{1}{R} \left( 1-\frac{\epsilon}{\xi_j} \right).
\end{align*}

Using the inequality above, we have that
\begin{align*}
	p_{e_{\min}}-p_j 
	&= -(1-\rho) \left(\xi_j-\xi_{e_{\min}}\right) + \rho \left(\zeta_{e_{\min}}-\zeta_j\right) \\
	&\geq -(1-\rho)\xi_j + \frac{\rho}{R} \left( 1-\frac{\epsilon}{\xi_j} \right).
\end{align*}

The right-hand side is positive whenever
\begin{align*}
	\rho > \frac{R\xi_j} {R\xi_j+1-\epsilon/\xi_j} = \frac{c}{c + 1 - \epsilon R/c} 
\end{align*}
where the last equation comes from $\xi_j = c/R$. As $\epsilon \rightarrow 0$, we get the condition for $\rho$ in the proposition. 

So far we have proved $p_{e_{\min}} \geq p_j$. To complete the proof, we will show that $e_{\min} \in {\cal A}^p$, i.e., $e_{\min}$ is indeed activated in MirrorMoE. 
Because $j:=\arg\min_{e\in{\cal A}^{\xi}}\xi_e$, every non-activated expert $e \notin {\cal A}^\xi$ satisfies $\xi_{e_{\min}} \leq \xi_e \leq \xi_j$.
Hence, there exists some $\lambda_e\in[0,1]$ such that
\begin{align*}	
	\xi_e = \lambda_e\xi_{e_{\min}} + (1-\lambda_e)\xi_j.
\end{align*}

Since $f(x)= 1/x$ is convex on $\mathbb{R}_{++}$, we have that
\begin{align*}
	\frac{1}{\xi_e} \leq \frac{\lambda_e}{\xi_{e_{\min}}} + \frac{1-\lambda_e}{\xi_j}.
\end{align*}

Denote $C = \left(\sum_{e=1}^R\frac{1}{\xi_e}\right)^{-1}$. It then follows that
\begin{align*}
		p_e 
	& = (1-\rho)\xi_e +\rho \frac{C}{\xi_e} \\ 
	& \leq \lambda_e \left( (1-\rho)\xi_{e_{\min}} + \frac{\rho C}{\xi_{e_{\min}}} \right) +
(1-\lambda_e) \left( (1-\rho)\xi_j + \frac{\rho C}{\xi_j} \right) \\
	& = \lambda_e p_{e_{\min}} + (1-\lambda_e)p_j \\
	& \leq p_{e_{\min}}, \forall e \notin {\cal A}^\xi
\end{align*}
where the final inequality follows from $p_{e_{\min}}>p_j$. 
Combining $p_{e_{\min}} \geq p_j$ and $p_{e_{\min}} \geq p_e, \forall e \notin {\cal A}^\xi$, it can be seen that ${e_{\min}} $ has to be activated in MirrorMoE, i.e., ${e_{\min}} \in {\cal A}^p$.

Finally, since $e_{\min} \in {\cal A}^p$, according to the aggregation in \eqref{eq.mirror-agg}, $\xi_{e_{\min}} = \epsilon \rightarrow 0$ ensures that $e_{\min}$ is functionally suppressed. The proof is thus completed.
\end{proof}

\subsection{Proof of Prop. \ref{prop.no-path-pruning}}
\begin{proof}
Let $ C = \left( \sum_{r=1}^R \frac{1}{\xi_r} \right)^{-1}$. Recall from \eqref{eq.routing_prob}, $p_e = (1-\rho)\xi_e + \rho\zeta_e $ and $\zeta_e=\frac{C}{\xi_e}$.

We first show that $C \leq\frac{1}{R^2}$ via Cauchy-Schwarz inequality. To see this, we have 
\begin{align*}
	\left(\sum_{r=1}^R\xi_r\right)
	\left(\sum_{r=1}^R\frac{1}{\xi_r}\right) \geq R^2.
\end{align*}
It is easy to see $C \leq\frac{1}{R^2}$ once noticing $\sum_r\xi_r=1$.

Now, consider any two experts $i$ and $j$ satisfying $\xi_i>\xi_j$. We have
\begin{align*}
	  p_i - p_j 
	& = (1-\rho)(\xi_i-\xi_j) + \rho C \left( \frac{1}{\xi_i}-\frac{1}{\xi_j} \right) \\
	& = (\xi_i-\xi_j) \left[ (1-\rho) - \rho \frac{C}{\xi_i\xi_j} \right].
\end{align*}

Since $\xi_i\xi_j \geq \xi_{\min}^2$ and $\xi_{\max}\geq 1/R$, it can be seen that
\begin{align*}
	\frac{C}{\xi_i\xi_j} \leq \frac{1}{R^2 \xi_i\xi_j} 
	\leq \frac{1}{R^2\xi_{\min}^2} 
	\leq \left( \frac{\xi_{\max}}{\xi_{\min}} \right)^2 
	< \kappa^2 = \frac{1-\rho}{\rho}.
\end{align*}
Combining the above two equations, it is straightforward to see  $\xi_i>\xi_j \implies p_i > p_j $ for all $i$ and $j$. Thus, it can be seen that the ordering of experts are same if ranked with $\{ \xi_e\}$ and  $\{ p_e\}$. This indicates the top-$(K-S)$ aggregation will choose the same set of experts, i.e., ${\cal A}^\xi = {\cal A}^p$.

It remains to show that none of the selected experts has a vanishing score. For any selected expert $e\in{\cal A}^p$,
\begin{align*}
	\bar{\xi}_e = \frac{\xi_e} {\sum_{r\in{\cal A}^p}\xi_r} \geq \frac{\xi_{\min}}{(K-S)\xi_{\max}} > \frac{1}{(K - S)\kappa}.
\end{align*}
Thus, every selected expert has a non-vanishing aggregation weight, which rules out functional path pruning.
\end{proof}

\end{document}